\title{Mirror Descent Meets Fixed Share \\ (and feels no regret)}
\author{
Nicol\`{o} Cesa-Bianchi \\
Universit\`{a} degli Studi di Milano \\
\texttt{nicolo.cesa-bianchi@unimi.it}
\And
Pierre Gaillard \\
Ecole Normale Sup\'{e}rieure\thanks{Ecole Normale Sup\'{e}rieure, Paris -- CNRS -- INRIA, within the project-team CLASSIC}, Paris \\
\texttt{pierre.gaillard@ens.fr}
\And
G\'{a}bor Lugosi \\
ICREA \& Universitat Pompeu Fabra, Barcelona \\
\texttt{gabor.lugosi@upf.edu}
\And
Gilles Stoltz \\
Ecole Normale Sup\'{e}rieure\footnotemark[1], Paris
{\&}\\ HEC Paris, Jouy-en-Josas, France \\
\texttt{gilles.stoltz@ens.fr}
}
\newtheorem{theorem}{Theorem}
\newtheorem{lemma}{Lemma}
\newtheorem{corollary}{Corollary}
\newcommand{\argmin}{\mathop{\mathrm{argmin}}}
\renewcommand{\leq}{\leqslant}
\renewcommand{\le}{\leqslant}
\renewcommand{\geq}{\geqslant}
\newcommand{\loss}{\ell}
\newcommand{\bloss}{\mathbf{\ell}}
\renewcommand{\bp}{\widehat{\mathbf{p}}}
\begin{document}

\maketitle

\begin{abstract}
Mirror descent with an entropic regularizer is known to achieve shifting regret bounds that are logarithmic
in the dimension. This is done using either a carefully designed projection or by a weight sharing
technique. Via a novel unified analysis, we show that these two approaches deliver essentially equivalent bounds
on a notion of regret generalizing shifting, adaptive, discounted, and other related regrets. Our analysis also captures
and extends the generalized weight sharing technique of Bousquet and Warmuth, and can be refined in
several ways, including improvements for small losses and adaptive tuning of parameters.
\end{abstract}

\section{Introduction}

Online convex optimization is a sequential prediction paradigm in which,
at each time step, the learner chooses an element from a fixed convex
set $\scS$ and then is given access to a convex loss function defined on the
same set. The value of the function on the chosen element is
the learner's loss. Many problems such as prediction with expert
advice, sequential investment, and online regression/classification
can be viewed as special cases of this general framework.
Online learning algorithms are designed to minimize the regret. The
standard notion of regret is the difference between the learner's
cumulative loss and the cumulative loss of the single best element in
$\scS$. A much harder criterion to minimize is shifting regret, which
is defined as the difference between the learner's cumulative loss and
the cumulative loss of an arbitrary sequence of elements in
$\scS$. Shifting regret bounds are typically expressed in terms of the
\textsl{shift}, a notion of regularity measuring the length of the trajectory in $\scS$ described by
the comparison sequence (i.e., the sequence of elements against which the regret is evaluated).
In online convex optimization, shifting regret bounds for convex subsets $\scS\subseteq\R^d$ are obtained for the projected online mirror descent (or follow-the-regularized-leader) algorithm. In this case the shift is typically computed in terms of the $p$-norm of the difference of consecutive elements in the comparison sequence ---see~\cite{hewa01,Zink03} and~\cite{CeLu}.

We focus on the important special case when $\scS$ is the simplex. In~\cite{hewa01} shifting bounds are shown for projected mirror descent with entropic regularizers using a $1$-norm to measure the shift.\footnote{Similar $1$-norm shifting bounds can also be proven using the analysis of~\cite{Zink03}. However, without using entropic regularizers it is not clear how to achieve a logarithmic dependence on the dimension, which is one of the advantages of working in the simplex.} When the comparison sequence is restricted to the corners of the simplex (which is the setting of prediction with expert advice), then the shift is naturally defined to be the number of times the trajectory moves to a different corner. This problem is often called ``tracking the best expert'' ---see, e.g., \cite{hewa98,Vov99,hewa01,bowa,GyLiLu05a}, and it is well known that exponential weights with weight sharing, which corresponds to the fixed-share algorithm of~\cite{hewa98}, achieves a good shifting bound in this setting. In~\cite{bowa} the authors introduce a generalization of the fixed-share algorithm, and prove various shifting bounds for any trajectory in the simplex. However, their bounds are expressed using a quantity that corresponds to a proper shift only for trajectories on the simplex corners.

In this paper we offer a unified analysis of mirror descent, fixed share, and the generalized fixed share of~\cite{bowa} for the setting of online convex optimization in the simplex. Our bounds are expressed in terms of a notion of shift based on the total variation distance. Our analysis relies on a generalized notion of shifting regret which includes, as special cases, related notions of regret such as adaptive regret, discounted regret, and regret with time-selection functions. Perhaps surprisingly, we show that projected mirror descent and fixed share achieve essentially the same generalized regret bound. Finally, we show that widespread techniques in online learning, such as improvements for small losses and adaptive tuning of parameters, are all easily captured by our analysis.

\section{Preliminaries}
\label{sec:prelim}
For simplicity, we derive our results in the setting of online linear optimization. As we show in the supplementary material, these results can be easily extended to the more general setting of online convex optimization through a standard linearization step.

Online linear optimization may be cast as a repeated game between the {\sl forecaster} and the {\sl environment} as follows. We use $\Delta_d$ to denote the simplex $\bigl\{ \bq \in [0,1]^d \,:\, \|\bq\|_1 = 1 \bigr\}$.

\noindent
{\bf Online linear optimization in the simplex.} For each round $t = 1, \dots, T$, \\
\indent 1. Forecaster chooses $\bp_t=(\wh{p}_{1,t},\ldots,\wh{p}_{d,t})\in \Delta_d$ \\
\indent 2. Environment chooses a loss vector $\bloss_t = (\loss_{1,t},\ldots,\loss_{d,t}) \in [0,1]^d$ \\
\indent 3. Forecaster suffers loss $\bp_t^{\top}\bloss_t$~.

The goal of the forecaster is to minimize the accumulated loss, e.g.,
$\wh{L}_T= \sum_{t=1}^T \bp_t^{\top}\bloss_t$.
In the now classical problem of prediction with expert advice,
the goal of the forecaster is to compete with the best fixed component
(often called ``expert'')
chosen in hindsight, that is, with $\min_{i=1,\ldots,T} \sum_{t=1}^T \loss_{i,t}$;
or even to compete with a richer class of {\sl sequences} of components.
In Section~\ref{sec:target} we state more specifically the goals considered in this paper.

We start by introducing our main algorithmic tool, described in
Figure \ref{alg:genupd}, a share algorithm whose formulation
generalizes the seemingly unrelated formulations of the algorithms studied in~\cite{hewa98,hewa01,bowa}. It is parameterized by the ``mixing functions''
$\psi_{t}:[0,1]^{t d} \to \Delta_d$ for $t \geq 2$
that assign probabilities to past ``pre-weights'' as defined below.
In all examples discussed in this paper, these mixing functions are
quite simple, but working with such a general model makes the main ideas more
transparent. We then provide a simple lemma that serves as the starting
point\footnote{We only deal with linear losses in this paper. However, it is straightforward that
for sequences of $\eta$--exp-concave loss functions, the additional
term $\eta/8$ in the bound is no longer needed.}
for analyzing different instances of this generalized share algorithm.

\begin{algorithm}[h]
\begin{minipage}{\textwidth}
\rule{\linewidth}{.5pt}
\caption{\label{alg:1} The generalized share algorithm.}
\label{alg:genupd}
\medskip
	\textbf{Parameters:} learning rate $\eta>0$ and mixing functions $\psi_{t}$ for $t \geq 2$ \vspace{.125cm} \\
	\textbf{Initialization:} $\bp_1 = \mathbf{v}_1 = (1/d,\dots,1/d)$ \vspace{.125cm} \\
	\textbf{For} each round $t = 1, \dots, T$,
	\begin{itemize}
	 \nitem{1} Predict $\bp_t$ ;
	 \nitem{2} Observe loss $\bloss_t \in [0,1]^d$ ;
	 \nitem{3} [loss update] \textbf{For} each $j=1,\dots,d$ define \\
	 $\displaystyle{v_{j,t+1} = \frac{\wh{p}_{j,t} \, e^{-\eta\,\loss_{j,t}}}{\sum_{i=1}^d \wh{p}_{i,t} \, e^{-\eta\,\loss_{i,t}}}} \ \ \ $ the current pre-weights, \ \ \ and $\bv_{t+1} = (v_{1,t+1},\ldots,v_{d,t+1})$; \\[2mm]
	 $ \uV = \bigl[v_{i,s}\bigr]_{1 \leq i \leq d, \, 1\leq s\leq t+1} \quad$ the $d \times (t+1)$
matrix of all past and current pre-weights;
	 \nitem{4} [shared update] Define $\displaystyle{\bp_{t+1} = \psi_{t+1} \bigl(\uV \bigr)}$.
	\end{itemize}
\rule{\linewidth}{.5pt} \vspace{.125cm}
\end{minipage}
\end{algorithm}

\begin{lemma}
\label{lem:b0}
For all $t\geq 1$ and for all $\bq_t\in\Delta_d$, Algorithm~\ref{alg:genupd} satisfies
\[
    \bigl(\bp_t - \bq_t\bigr)^{\! \top}\bloss_t
\le
    \frac{1}{\eta} \sum_{i=1}^d q_{i,t} \ln \frac{v_{i,t+1}}{\wh{p}_{i,t}} + \frac{\eta}{8}~.
\]
\end{lemma}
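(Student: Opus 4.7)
The plan is to unwrap the definition of $v_{i,t+1}$ coming from the loss update, convert the right-hand side into a standard log-partition expression, and then bound that expression via Hoeffding's lemma.

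First I would take the logarithm of the loss update. Since
\[
v_{i,t+1} \;=\; \frac{\wh{p}_{i,t} \, e^{-\eta\,\ell_{i,t}}}{Z_t}, \qquad \text{where} \quad Z_t \;=\; \sum_{j=1}^d \wh{p}_{j,t} \, e^{-\eta\,\ell_{j,t}},
\]
we have $\ln(v_{i,t+1}/\wh{p}_{i,t}) = -\eta\,\ell_{i,t} - \ln Z_t$. Summing against any $\bq_t \in \Delta_d$, and using $\sum_i q_{i,t} = 1$, gives
\[
\frac{1}{\eta} \sum_{i=1}^d q_{i,t} \ln \frac{v_{i,t+1}}{\wh{p}_{i,t}} \;=\; -\,\bq_t^{\!\top}\bloss_t \,-\, \frac{1}{\eta}\ln Z_t.
\]
Substituting this identity into the target inequality and cancelling the $-\bq_t^{\!\top}\bloss_t$ term, what remains to prove is the $\bq_t$-free bound
\[
\bp_t^{\!\top}\bloss_t \;\le\; -\frac{1}{\eta}\ln Z_t + \frac{\eta}{8},
\]
equivalently $\ln \sum_i \wh{p}_{i,t}\,e^{-\eta\,\ell_{i,t}} \le -\eta\,\bp_t^{\!\top}\bloss_t + \eta^2/8$.

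The last inequality is precisely Hoeffding's lemma applied to the random variable $X$ that takes value $\ell_{i,t} \in [0,1]$ with probability $\wh{p}_{i,t}$: since $\mathbb{E}[X] = \bp_t^{\!\top}\bloss_t$ and $X$ is supported in an interval of length $1$, one has $\ln \mathbb{E}[e^{-\eta X}] \le -\eta\,\mathbb{E}[X] + \eta^2/8$. This closes the proof.

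There is no real obstacle: the only non-trivial step is recognizing that after unwrapping the update rule the inequality reduces to a pure moment-generating-function bound for a $[0,1]$-valued random variable, which is the classical Hoeffding inequality. The footnote about $\eta$-exp-concave losses is also explained by this viewpoint: under exp-concavity one has $\ln \mathbb{E}[e^{-\eta X}] \le -\eta\,\mathbb{E}[X]$ directly, so the additive $\eta/8$ slack disappears.
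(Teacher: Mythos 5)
Your proof is correct and is essentially the paper's own argument: both reduce the claim, via the identity $\ln(v_{i,t+1}/\wh{p}_{i,t}) = -\eta\ell_{i,t} - \ln\sum_j \wh{p}_{j,t}e^{-\eta\ell_{j,t}}$, to Hoeffding's bound on the log-moment-generating function of the $[0,1]$-valued loss under the distribution $\bp_t$, and then aggregate with respect to $\bq_t$ (you merely unwrap the update before invoking Hoeffding, while the paper invokes Hoeffding first). Your closing remark about $\eta$-exp-concave losses matches the paper's footnote as well.
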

\begin{proof}
By Hoeffding's inequality (see, e.g., \cite[Section~A.1.1]{CeLu}),
\begin{equation}
\label{eq:convexhoeffding}
    \sum_{j=1}^d \wh{p}_{j,t}\,\loss_{j,t}
    \le
    - \frac{1}{\eta} \ln \left( {\sum_{j=1}^d \wh{p}_{j,t}\, e^{-\eta\,\loss_{j,t}}} \right) + \frac{\eta}{8}\,.
\end{equation}
By definition of $v_{i,t+1}$, for all $i=1,\dots,d$ we then have $\sum_{j=1}^d \wh{p}_{j,t}\,e^{-\eta\,\loss_{j,t}} = \wh{p}_{i,t}\,e^{-\eta\,\loss_{i,t}}/v_{i,t+1},$
which implies $\quad\bp_t^{\top}\bloss_t \le
\loss_{i,t} + (1/\eta) \ln (v_{i,t+1}/\wh{p}_{i,t}) + \eta/8$.
The proof is concluded by taking a convex aggregation with respect to $\bq_t$.
\end{proof}

\section{A generalized shifting regret for the simplex}
\label{sec:target}

We now introduce a generalized notion of shifting regret which unifies and generalizes the notions of discounted regret (see~\cite[Section~2.11]{CeLu}), adaptive regret (see~\cite{hase09}), and shifting regret (see~\cite{Zink03}). For a fixed horizon $T$, a sequence of discount factors $\beta_{t,T} \geq 0$ for $t=1,\dots,T$ assigns varying weights to the instantaneous losses suffered at each round.
We compare the total loss of the forecaster with the loss of an arbitrary sequence of vectors $\bq_1,\ldots,\bq_T$ in the simplex $\Delta_d$. Our goal is to bound the regret
\[
\sum_{t=1}^T \beta_{t,T} \, \bp_t^{\top} \bloss_t - \sum_{t=1}^T \beta_{t,T} \, \bq_t^{\top} \bloss_t
\]
in terms of the ``regularity'' of the comparison sequence $\bq_1,\ldots,\bq_T$ and of the variations of the discounting weights $\beta_{t,T}$. By setting $\bu_t = \beta_{t,T} \, \bq_t^{\top} \in \R_+^d$, we can rephrase the above regret as
\begin{equation}
\label{eq:defregr}
\sum_{t=1}^T \|\bu_t\|_1 \, \bp_t^{\top} \bloss_t  - \sum_{t=1}^T \bu_t^{\top} \bloss_t~.
\end{equation}
In the literature on tracking the best expert~\cite{hewa98,Vov99,hewa01,bowa}, the
regularity of the sequence $\bu_1,\ldots,\bu_T$ is measured as the number of
times $\bu_t \neq \bu_{t+1}$.
We introduce the following regularity measure
\begin{equation}
\label{eq:regularity}
m(\b u_1^T) =  \sum_{t=2}^{T} D_{\mathrm{TV}}(\bu_{t},\bu_{t-1})
\end{equation}
where for $\bx=(x_1,\ldots,x_d),\by=(y_1,\ldots,y_d)\in \R_+^d$, we define
$D_{\mathrm{TV}}(\bx,\by) =  \sum_{x_i \geq y_i} (x_i - y_i)$.
Note that when $\bx,\by\in \Delta_d$, we recover the total variation
distance $D_{\mathrm{TV}}(\bx,\by) = \tfrac{1}{2} \norm[\bx-\by]_1$, while for general $\bx,\by\in \R_+^d$, the quantity $D_{\mathrm{TV}}(\bx,\by)$ is not necessarily
symmetric and is always bounded by $\norm[\bx-\by]_1$.
The traditional shifting regret of~\cite{hewa98,Vov99,hewa01,bowa} is obtained from \eqref{eq:defregr} when all $\bu_t$ are such that $\norm[\bu_t]_1 = 1$.

\section{Projected update}
The shifting variant of the EG algorithm analyzed in~\cite{hewa01} is a special case of the generalized share algorithm in which the function $\psi_{t+1}$ performs a projection of the pre-weights on the convex set $\Delta_d^\alpha = [{\alpha}/{d},1]^d \cap \Delta_d$. Here $\alpha \in (0,1)$ is a fixed parameter. We can prove (using techniques
similar to the ones shown in the next section---see the
supplementary material) the following bound which generalizes~\cite[Theorem~16]{hewa01}.
\begin{theorem}
\label{th:HW}
For all $T \geq 1$, for all sequences $\bloss_1,\dots,\bloss_t \in [0,1]^d$ of loss vectors, and for all $\bu_1,\dots,\bu_T \in \R_+^d$, if Algorithm~1 is run with the above update, then
\begin{equation}
\label{eq:zinlin}
\sum_{t=1}^T \|\bu_t\|_1 \, \bp_t^{\top} \bloss_t  - \sum_{t=1}^T \bu_t^{\top} \bloss_t
\leq \frac{\norm[\bu_1]_1 \ln d}{\eta} + \frac{m(\bu_1^T)}{\eta}\ln \frac{d }{\alpha}  + \left(\frac{\eta}{8} + {\alpha }\right) \sum_{t=1}^T\norm[\bu_t]_1~.
\end{equation}
\end{theorem}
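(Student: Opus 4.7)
My plan is to start from Lemma~\ref{lem:b0} applied to a smoothed comparator, invoke the Bregman (KL) projection inequality on $\Delta_d^\alpha$, telescope via Abel summation, and finally relate the shift of the smoothed comparator back to $m(\bu_1^T)$.

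First I would set $\bq_t = \bu_t/\|\bu_t\|_1$ (the degenerate case $\|\bu_t\|_1=0$ being trivial) and introduce the smoothed comparator $\tilde{\bq}_t = (1-\alpha)\bq_t + (\alpha/d)\mathbf{1}$, which lies in $\Delta_d^\alpha$, together with $\tilde{\bu}_t = \|\bu_t\|_1\tilde{\bq}_t = (1-\alpha)\bu_t + (\alpha/d)\|\bu_t\|_1\mathbf{1}$. Applying Lemma~\ref{lem:b0} with $\bq_t = \tilde{\bq}_t$ and scaling by $\|\bu_t\|_1$ yields
\[
\|\bu_t\|_1\,\bp_t^{\top}\bloss_t - \tilde{\bu}_t^{\top}\bloss_t \le \frac{1}{\eta}\sum_{i=1}^d \tilde{u}_{i,t}\ln\frac{v_{i,t+1}}{\wh{p}_{i,t}} + \frac{\eta}{8}\|\bu_t\|_1~.
\]
Because $\tilde{\bu}_t^{\top}\bloss_t - \bu_t^{\top}\bloss_t = -\alpha\,\bu_t^{\top}\bloss_t + (\alpha/d)\|\bu_t\|_1\|\bloss_t\|_1 \le \alpha\|\bu_t\|_1$ (using $\bloss_t\in[0,1]^d$ and $\bu_t^{\top}\bloss_t\ge 0$), replacing $\tilde{\bu}_t$ by $\bu_t$ on the left-hand side produces exactly the extra $\alpha\sum_t\|\bu_t\|_1$ term in the bound.

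Second, after summing over $t$, the remaining task is to control $S := \sum_t\sum_i\tilde{u}_{i,t}\ln(v_{i,t+1}/\wh{p}_{i,t})$. Since $\bp_{t+1}$ is the KL-projection of $\bv_{t+1}$ onto $\Delta_d^\alpha$ and $\tilde{\bq}_t\in\Delta_d^\alpha$, the generalized Pythagorean inequality gives $\sum_i\tilde{q}_{i,t}\ln v_{i,t+1}\le\sum_i\tilde{q}_{i,t}\ln\wh{p}_{i,t+1}$. Hence $S\le\sum_t\sum_i\tilde{u}_{i,t}(\ln\wh{p}_{i,t+1}-\ln\wh{p}_{i,t})$, which Abel summation rewrites as
\[
\sum_i\tilde{u}_{i,T}\ln\wh{p}_{i,T+1} \;-\; \sum_i\tilde{u}_{i,1}\ln\wh{p}_{i,1} \;+\; \sum_{t=2}^T\sum_i(\tilde{u}_{i,t-1}-\tilde{u}_{i,t})\ln\wh{p}_{i,t}~.
\]
The first sum is non-positive (drop it); the second equals $\|\bu_1\|_1\ln d$ since $\wh{p}_{i,1}=1/d$; and for the third, the projection constraint $\wh{p}_{i,t}\ge\alpha/d$ gives $|\ln\wh{p}_{i,t}|\le\ln(d/\alpha)$, so a termwise split on the sign of $\tilde{u}_{i,t-1}-\tilde{u}_{i,t}$ yields the bound $\ln(d/\alpha)\cdot D_{\mathrm{TV}}(\tilde{\bu}_t,\tilde{\bu}_{t-1})$.

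The last step, and the main obstacle of the proof, is the variation comparison $D_{\mathrm{TV}}(\tilde{\bu}_t,\tilde{\bu}_{t-1})\le D_{\mathrm{TV}}(\bu_t,\bu_{t-1})$, ensuring that the smoothing does not inflate the shift. Writing $a_i = u_{i,t}-u_{i,t-1}$ and $s=\sum_i a_i = \|\bu_t\|_1 - \|\bu_{t-1}\|_1$, I would use $\tilde{u}_{i,t}-\tilde{u}_{i,t-1} = (1-\alpha)a_i + (\alpha/d)s$; subadditivity of $(\cdot)_+$ then gives $D_{\mathrm{TV}}(\tilde{\bu}_t,\tilde{\bu}_{t-1})\le(1-\alpha)D_{\mathrm{TV}}(\bu_t,\bu_{t-1}) + \alpha(s)_+$, and $(s)_+\le\sum_i(a_i)_+ = D_{\mathrm{TV}}(\bu_t,\bu_{t-1})$ closes the comparison. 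Summing over $t$ yields $\sum_{t=2}^T D_{\mathrm{TV}}(\tilde{\bu}_t,\tilde{\bu}_{t-1})\le m(\bu_1^T)$, and combining the three pieces delivers~\eqref{eq:zinlin}. This last comparison is the one genuinely non-routine ingredient, as it is exactly where the asymmetry of $D_{\mathrm{TV}}$ and the unnormalized nature of the $\bu_t$'s interact; the other steps (Hoeffding, KL projection, Abel) are standard.
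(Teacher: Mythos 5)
Your proof is correct and follows the same route as the paper's supplementary-material proof: smooth the comparator onto $\Delta_d^\alpha$, invoke the generalized Pythagorean inequality for the Kullback--Leibler projection, Abel-sum, and bound the resulting shift. In fact, you handle the final variation-comparison step more carefully than the paper does. The paper's proof asserts the underbrace identity $\norm[\bu_t]_1 q_{i,t} - \norm[\bu_{t-1}]_1 q_{i,t-1} = (1-\alpha)(u_{i,t}-u_{i,t-1})$, which holds only when $\norm[\bu_t]_1 = \norm[\bu_{t-1}]_1$; the correct identity carries an extra $(\alpha/d)\bigl(\norm[\bu_t]_1-\norm[\bu_{t-1}]_1\bigr)$ term, which can be positive. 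Your explicit comparison $D_{\mathrm{TV}}(\tilde{\bu}_t,\tilde{\bu}_{t-1})\leq D_{\mathrm{TV}}(\bu_t,\bu_{t-1})$, via subadditivity of $(\cdot)_+$ together with $\bigl(\norm[\bu_t]_1-\norm[\bu_{t-1}]_1\bigr)_+\leq D_{\mathrm{TV}}(\bu_t,\bu_{t-1})$, absorbs that extra term and recovers exactly the stated bound; the paper's intermediate $(1-\alpha)$ factor is an artifact of the dropped term and is in any case rounded up to $1$ in the theorem statement. So the two arguments are structurally identical, with yours being the tightened, fully rigorous version of the paper's.
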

This bound can be optimized by a proper tuning of $\alpha$ and $\eta$ parameters. We show a similarly tuned (and slightly better) bound in Corollary~\ref{cor:dtvopt}.

\section{Fixed-share update}
\label{sec:FS}
Next, we consider a different instance of the generalized share algorithm corresponding to the update
\begin{equation}
\label{eq:shareupdateclassicalfs}
    \wh{p}_{j,t+1} = \sum_{i=1}^d \left(\frac{\alpha}{d} + (1-\alpha) \ind_{i=j}\right) v_{i,t+1}
= \frac{\alpha}{d} + (1-\alpha) v_{j,t+1}\,,
\qquad
    0 \le \alpha \le 1
\end{equation}
Despite seemingly different statements, this update in Algorithm~\ref{alg:genupd} can be seen to lead {\sl exactly} to the fixed-share algorithm of~\cite{hewa98} for prediction with expert advice. We now show that this update delivers a bound on the regret almost equivalent to (though slightly better than)
that achieved by projection on the subset $\Delta_d^{\alpha}$ of the simplex.
\begin{theorem}
\label{prop:dtv}
With the above update, for all $T \geq 1$, for all sequences $\bloss_1,\dots,\bloss_T$ of loss
vectors $\bloss_t\in [0,1]^d$, and for all $\bu_1,\dots,\bu_T \in \R_+^d$,
\begin{multline*}
\sum_{t=1}^T  \norm[\bu_t]_1\,\bp_t^{\top}\bloss_t - \sum_{t=1}^T \bu_t^{\top}\bloss_t
\le
    \frac{\|\bu_1\|_1 \ln d}{\eta} + \frac{\eta}{8}
\sum_{t=1}^T \norm[\b u_t]_1
\\
    + \frac{m(\bu_1^T)}{\eta} \ln \frac{d}{\alpha} + \frac{\sum_{t=2}^T \norm[\b u_t]_1-m(\bu_1^T)}{\eta} \ln \frac{1}{1-\alpha}~.
\end{multline*}
\end{theorem}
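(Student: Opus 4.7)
The plan is to apply Lemma~\ref{lem:b0} to reduce the weighted regret to a single sum of log-ratios, then control that sum by telescoping and exploiting the fixed-share update $\wh{p}_{i,t} = \alpha/d + (1-\alpha) v_{i,t}$. First, I apply Lemma~\ref{lem:b0} with $\bq_t = \bu_t/\|\bu_t\|_1$ (the case $\|\bu_t\|_1 = 0$ being trivial) and multiply through by $\|\bu_t\|_1$; summing over $t = 1,\dots,T$ reduces the theorem to the claim
\[
S := \sum_{t=1}^T \sum_{i=1}^d u_{i,t} \ln \frac{v_{i,t+1}}{\wh{p}_{i,t}}
\le
\|\bu_1\|_1 \ln d + m(\bu_1^T) \ln\frac{d}{\alpha} + \Bigl(\sum_{t=2}^T \|\bu_t\|_1 - m(\bu_1^T)\Bigr) \ln\frac{1}{1-\alpha},
\]
since the $\eta/8$ contribution and the overall $1/\eta$ factor are already accounted for.

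Next, split $S = \sum_{t,i} u_{i,t} \ln v_{i,t+1} - \sum_{t,i} u_{i,t} \ln \wh{p}_{i,t}$ and reindex $t \mapsto t-1$ in the first double sum. The $t=1$ boundary term yields $\|\bu_1\|_1 \ln d$ via $\wh{p}_{i,1} = 1/d$, while the $t=T+1$ boundary term $\sum_i u_{i,T} \ln v_{i,T+1}$ is nonpositive (since $v_{i,T+1} \in [0,1]$), so
\[
S \le \|\bu_1\|_1 \ln d + \sum_{t=2}^T \sum_{i=1}^d \bigl(u_{i,t-1} \ln v_{i,t} - u_{i,t} \ln \wh{p}_{i,t}\bigr).
\]

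The core step is a per-coordinate bound on the inner expression. The fixed-share update supplies two inequalities: $\wh{p}_{i,t} \ge (1-\alpha) v_{i,t}$ gives $\ln(v_{i,t}/\wh{p}_{i,t}) \le \ln(1/(1-\alpha))$, and $\wh{p}_{i,t} \ge \alpha/d$ gives $-\ln \wh{p}_{i,t} \le \ln(d/\alpha)$. Writing $u_{i,t-1} = \min(u_{i,t-1},u_{i,t}) + (u_{i,t-1}-u_{i,t})^+$ and $u_{i,t} = \min(u_{i,t-1},u_{i,t}) + (u_{i,t}-u_{i,t-1})^+$ rearranges the inner term as
\[
\min(u_{i,t-1},u_{i,t})\bigl(\ln v_{i,t} - \ln \wh{p}_{i,t}\bigr) + (u_{i,t-1}-u_{i,t})^+ \ln v_{i,t} + (u_{i,t}-u_{i,t-1})^+ (-\ln \wh{p}_{i,t}),
\]
which, since $\ln v_{i,t} \le 0$, is bounded by $\min(u_{i,t-1},u_{i,t}) \ln\frac{1}{1-\alpha} + (u_{i,t}-u_{i,t-1})^+ \ln\frac{d}{\alpha}$. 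Summing over $i$ and using the identities $\sum_i (u_{i,t}-u_{i,t-1})^+ = D_{\mathrm{TV}}(\bu_t,\bu_{t-1})$ and $\sum_i \min(u_{i,t-1},u_{i,t}) = \|\bu_t\|_1 - D_{\mathrm{TV}}(\bu_t,\bu_{t-1})$ (both immediate from the definition of $D_{\mathrm{TV}}$), then summing over $t=2,\dots,T$, will produce the two remaining log-factor terms.

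The main obstacle is precisely this last decomposition: naively using $v_{i,t} \le \wh{p}_{i,t}/(1-\alpha)$ uniformly would charge $\ln(1/(1-\alpha))$ against the full mass $\sum_{t=2}^T \|\bu_t\|_1$ and lose the $-m(\bu_1^T)$ subtraction. Separating the ``common mass'' $\min(u_{i,t-1},u_{i,t})$, which only pays $\ln(1/(1-\alpha))$, from the ``new mass'' $(u_{i,t}-u_{i,t-1})^+$, which pays the larger $\ln(d/\alpha)$, is what produces the slightly refined bound stated in the theorem compared with Theorem~\ref{th:HW}.
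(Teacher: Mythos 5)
Your proposal is correct and follows essentially the same route as the paper's own proof. The reindexing of the first double sum and absorption of the $t=1$ and $t=T+1$ boundary terms is the paper's decomposition~\eqref{eq:proofdtv1} (shift part plus telescope part) written in one pass, and your split $u_{i,t-1}=\min(u_{i,t-1},u_{i,t})+(u_{i,t-1}-u_{i,t})^+$, $u_{i,t}=\min(u_{i,t-1},u_{i,t})+(u_{i,t}-u_{i,t-1})^+$ is algebraically identical to the paper's case analysis on whether $u_{i,t}\geq u_{i,t-1}$ in~\eqref{eq:proofdtv2}; the same two fixed-share inequalities $1/\wh{p}_{i,t}\leq d/\alpha$ and $v_{i,t}/\wh{p}_{i,t}\leq 1/(1-\alpha)$ then yield the $D_{\mathrm{TV}}$ and residual mass terms.
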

Note that if we only consider vectors of the form $\bu_t = \bq_t = (0,\ldots,0,1,0,\ldots,0)$ then
$m(\bq_1^T)$ corresponds to the number of times $\bq_{t+1} \neq \bq_t$ in the sequence $\bq_1^T$. We thus recover
\cite[Theorem~1]{hewa98} and \cite[Lemma 6]{bowa} from the much more general Theorem~\ref{prop:dtv}.

The fixed-share forecaster does not need to ``know''
anything in advance about the sequence of the norms $\|\bu_t\|$
for the bound above to be valid. Of course, in order to
minimize the obtained upper bound, the tuning parameters $\alpha, \, \eta$ need to
be optimized and their values will depend on the maximal values of
$m(\bu_1^T)$ and $\sum_{t=1}^T \norm[\b u_t]_1$ for the sequences one wishes to compete against.
This is illustrated in the following corollary, whose proof is omitted.
Therein, $h(x)=-x\ln x -(1-x)\ln(1-x)$ denotes the binary entropy function for
$x\in [0,1]$. We recall\footnote{As can be seen
by noting that $\ln\bigl(1/(1-x)\bigr) < x/(1-x)$}
that $h(x) \leq x \ln(e/x)$ for $x \in [0,1]$.
\begin{corollary}
\label{cor:dtvopt}
Suppose Algorithm~\ref{alg:genupd} is run with the
update~(\ref{eq:shareupdateclassicalfs}).
Let $m_0 > 0$ and $U_0 > 0$. For all $T \geq 1$, for all sequences $\bloss_1,\dots,\bloss_T$
of loss vectors $\bloss_t\in [0,1]^d$, and for all sequences
$\bu_1,\dots,\bu_T \in \R_+^d$ with $\norm[\bu_1]_1 + m(\bu_1^T) \leq m_0$
and $\sum_{t=1}^T \norm[\b u_t]_1 \leq U_0$,
\[
\sum_{t=1}^T  \norm[\bu_t]_1\,\bp_t^{\top}\bloss_t - \sum_{t=1}^T \bu_t^{\top}\bloss_t
\le
   \sqrt{\frac{U_0}{2}
\Biggl( m_0 \ln d + U_0 \, h\!\left(\frac{m_0}{U_0}\right)\Biggr)}
\le \sqrt{\frac{U_0 \, m_0}{2}
\Biggl( \ln d + \ln \left(\frac{e \, U_0}{m_0}\right)\Biggr)}
\]
whenever $\eta$ and $\alpha$ are optimally chosen in terms of $m_0$ and $U_0$.
\end{corollary}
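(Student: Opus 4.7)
The plan is to optimize the bound of Theorem~\ref{prop:dtv} first in $\alpha$ and then in $\eta$, using the two budget constraints. Introducing the shorthands $a = \|\bu_1\|_1$, $M = m(\bu_1^T)$, and $S = \sum_{t=2}^T \|\bu_t\|_1$, so that $a + M \le m_0$ and $a + S \le U_0$, I would first regroup the right-hand side of Theorem~\ref{prop:dtv} as
\[
\frac{(a+M)\ln d}{\eta} \;+\; \frac{-M\ln\alpha + (S-M)\ln\!\frac{1}{1-\alpha}}{\eta} \;+\; \frac{\eta\,(a+S)}{8}\, .
\]
The first term is immediately controlled by $(m_0\ln d)/\eta$ and the third by $\eta\,U_0/8$.

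The heart of the argument lies in the middle term. Since it is linear in $(M,S)$ with coefficient $\ln\!\frac{1-\alpha}{\alpha}$ in front of $M$ and coefficient $\ln\!\frac{1}{1-\alpha}$ in front of $S$, both nonnegative for $\alpha \in (0,1/2]$, I would maximize over $M \in [0, m_0-a]$ and $S \in [0, U_0-a]$, obtaining the upper bound $-(m_0-a)\ln\alpha + (U_0-m_0)\ln\!\frac{1}{1-\alpha}$. This in turn is nonincreasing in $a$ (its coefficient is $\ln\alpha \le 0$), hence largest at $a = 0$. A direct expansion of the binary entropy then shows that the choice $\alpha = m_0/U_0$ turns the remaining quantity $-m_0\ln\alpha + (U_0-m_0)\ln\!\frac{1}{1-\alpha}$ into exactly $U_0\, h(m_0/U_0)$.

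Putting the three pieces together, the right-hand side is at most
\[
\frac{m_0\ln d + U_0\, h(m_0/U_0)}{\eta} \;+\; \frac{\eta\,U_0}{8}\, ,
\]
and the standard AM--GM tuning of $\eta$ (setting it so that the two terms balance) delivers the first stated inequality. The second inequality then follows by applying the hint $h(x) \le x\ln(e/x)$ at $x = m_0/U_0$.

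The only delicate point I anticipate is the monotonicity argument on the middle term: it requires $\alpha \le 1/2$, i.e.\ the natural regime $m_0 \le U_0/2$. Outside of this regime the bound is either vacuous or easily recovered from a trivial $\alpha = 1/2$ estimate, so no genuinely new idea is needed, just careful bookkeeping to make the constants match $U_0\, h(m_0/U_0)$ verbatim.
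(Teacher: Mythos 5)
Your proof is correct in the natural regime $m_0 \le U_0/2$, and it takes the obvious and almost certainly intended route: plug the budgets into the bound of Theorem~\ref{prop:dtv}, maximize the middle term over the admissible $(M,S,a)$ polytope, then tune $\alpha$ and $\eta$. The regrouping, the observation that the coefficients of $M$ and $S$ in $M\ln\frac{1-\alpha}{\alpha}+S\ln\frac{1}{1-\alpha}$ are both nonnegative for $\alpha \le 1/2$, the monotone-in-$a$ step, the identity showing $\alpha = m_0/U_0$ turns the maximized middle term into exactly $U_0\,h(m_0/U_0)$, and the AM--GM tuning of $\eta$ are all verified.

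The one place I would push back is your closing paragraph. At $\alpha = 1/2$ the worst-case middle term is $U_0\ln 2$, and $\ln 2 = h(1/2) > h(m_0/U_0)$ strictly whenever $m_0/U_0 \ne 1/2$, so the ``trivial $\alpha = 1/2$ estimate'' does not recover $U_0\,h(m_0/U_0)$ verbatim, it overshoots it. In fact, for $m_0 > U_0/2$ no choice of $\alpha$ in Theorem~\ref{prop:dtv} delivers $U_0\,h(m_0/U_0)$ uniformly over the budget set: for $\alpha \le 1/2$ the worst-case value $m_0\ln\frac{1-\alpha}{\alpha}+U_0\ln\frac{1}{1-\alpha}$ is decreasing in $\alpha$ (its stationary point is at $m_0/U_0>1/2$), so it is minimized at $\alpha=1/2$ giving $U_0\ln 2$; and for $\alpha>1/2$ the coefficient of $M$ flips sign, the worst case puts $M=0$, and the term becomes the even larger $U_0\ln\frac{1}{1-\alpha}$. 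So the corollary either carries the implicit assumption $m_0 \le U_0/2$ (which covers every application in the paper, all of which have $m_0/U_0 \to 0$), or the stated constant is slightly off in the complementary regime. This is peripheral, but ``careful bookkeeping'' alone will not close it; the restriction $m_0 \le U_0/2$ should be stated rather than waved away.
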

\begin{proof}[Proof of Theorem~\ref{prop:dtv}]
Applying Lemma~\ref{lem:b0} with $\b q_t = \b u_t / \norm[\b u_t]_1$, and multiplying by $\norm[\b u_t]_1$, we get for all $t \geq 1$ and $\bu_t \in \R_+^d$
\begin{equation}
\label{eq:proofdtv0}
    \norm[\b u_t]_1 \,\bp_t^{\top}\bloss_t - \bu_t^{\top}\bloss_t
\le
    \frac{1}{\eta} \sum_{i=1}^d u_{i,t} \ln \frac{v_{i,t+1}}{\wh{p}_{i,t}} + \frac{\eta}{8} \norm[\b u_t]_1~.
\end{equation}
We now examine
\begin{equation}
\label{eq:proofdtv1}
\sum_{i=1}^d u_{i,t} \ln \frac{v_{i,t+1}}{\wh{p}_{i,t}}  =  \sum_{i=1}^d \left(u_{i,t} \ln \frac{1}{\wh{p}_{i,t}} - u_{i,t-1} \ln \frac{1}{v_{i,t}} \right) + \sum_{i=1}^d \left( u_{i,t-1} \ln \frac{1}{v_{i,t}} - u_{i,t} \ln \frac{1}{v_{i,t+1}} \right)~.
\end{equation}
For the first term on the right-hand side, we have
\begin{align}
\nonumber
    \sum_{i=1}^d \left( u_{i,t} \ln \frac{1}{\wh{p}_{i,t}} - u_{i,t-1} \ln \frac{1}{v_{i,t}} \right)
& =
    \sum_{i\,:\,u_{i,t} \geq u_{i,t-1}} \left(
    \left(u_{i,t}-u_{i,t-1}\right) \ln \frac{1}{\wh{p}_{i,t}} + u_{i,t-1} \ln \frac{v_{i,t}}{\wh{p}_{i,t}} \right)
\\ & \quad+
\label{eq:proofdtv2}
    \sum_{i\,:\,u_{i,t} < u_{i,t-1}} \biggl( \underbrace{\left(u_{i,t}-u_{i,t-1}\right) \ln \frac{1}{v_{i,t}}}_{\leq 0} + u_{i,t} \ln \frac{v_{i,t}}{\wh{p}_{i,t}} \biggr).
\end{align}
In view of the update~\eqref{eq:shareupdateclassicalfs}, we have $1/\wh{p}_{i,t} \leq d/\alpha$
and $v_{i,t}/\wh{p}_{i,t} \leq 1/(1-\alpha)$. Substituting in~\eqref{eq:proofdtv2}, we get
\begin{align*}
    \sum_{i=1}^d &\left( u_{i,t} \ln \frac{1}{\wh{p}_{i,t}} - u_{i,t-1} \ln \frac{1}{v_{i,t}} \right)
\\ &\le
    \sum_{i\,:\,u_{i,t} \geq u_{i,t-1}} \left(u_{i,t}-u_{i,t-1}\right) \ln \frac{d}{\alpha} + \left( \sum_{i:~u_{i,t} \geq u_{i,t-1}} u_{i,t-1}  + \sum_{i:~u_{i,t} < u_{i,t-1}}  u_{i,t} \right) \ln \frac{1}{1-\alpha}
\\ & =
    D_{\mathrm{TV}}(\bu_{t},\bu_{t-1}) \ln \frac{d}{\alpha} + \underbrace{\left( \sum_{i=1}^d u_{i,t} - \sum_{i\,:\,u_{i,t} \geq u_{i,t-1}}  (u_{i,t}-
    u_{i,t-1}) \right)}_{= \norm[\b u_t]_1 - D_{\mathrm{TV}}(\bu_{t},\bu_{t-1})} \ln \frac{1}{1-\alpha}\,.
\end{align*}
The sum of the second term in~\eqref{eq:proofdtv1} telescopes. Substituting the
obtained bounds in the first sum of the right-hand side in~\eqref{eq:proofdtv1}, and summing over $t=2,\dots,T$, leads to
\begin{multline}
\nonumber
\sum_{t=2}^T \sum_{i=1}^d u_{i,t} \ln \frac{v_{i,t+1}}{\wh{p}_{i,t}}  \leq  m(\bu_1^T) \ln \frac{d}{\alpha} + \left(
\sum_{t=2}^T \norm[\b u_t]_1-m(\bu_1^T)\right) \ln \frac{1}{1-\alpha}
\\ + \sum_{i=1}^d u_{i,1} \ln \frac{1}{v_{i,2}} - \underbrace{u_{i,T} \ln \frac{1}{v_{i,T+1}}}_{\leq 0}\,.
\end{multline} \vspace{-1.25cm} \ \\

\noindent
We hence get from~\eqref{eq:proofdtv0}, which we use in particular for $t=1$,
\begin{multline*}
    \sum_{t=1}^T \norm[\b u_t]_1 \bp_t^{\top}\bloss_t - \bu_t^{\top} \bloss_t
\le
	\frac{1}{\eta} \sum_{i=1}^d u_{i,1} \ln \frac{1}{\wh{p}_{i,1}} + \frac{\eta}{8} \sum_{t=1}^T \norm[\bu_t]_1
\\+
	 \frac{m(\bu_1^T)}{\eta} \ln \frac{d}{\alpha} + \frac{\sum_{t=2}^T \norm[\b u_t]_1
m(\bu_1^T)}{\eta} \ln \frac{1}{1-\alpha}\,. \quad
\end{multline*}  \vspace{-1.2cm}

\end{proof}

\section{Applications}


%

We now show how our regret bounds can be specialized to obtain bounds on adaptive and discounted regret, and on regret with time-selection functions. We show regret bounds only for the specific instance of the generalized share algorithm using update~(\ref{eq:shareupdateclassicalfs});
but the discussion below also holds up to minor modifications for the forecaster studied in Theorem~\ref{th:HW}.

\paragraph{Adaptive regret} was introduced by \cite{hase09} and can be viewed as a variant of discounted regret where the monotonicity assumption is dropped. For $\tau_0 \in \{1,\ldots,T\}$, the {\sl $\tau_0$-adaptive regret} of a forecaster is defined by
\begin{equation}
\label{def:adapt}
\c R_T^{\tau_0-\mathrm{adapt}}
= \max_{\scriptsize \begin{array}{c} [r,s] \subset [1,T] \\ s+1-r \leq \tau_0 \end{array}}
\left\{ \sum_{t=r}^{s} \bp_t^{\top}\bloss_t
 - \min_{\bq \in \Delta_d} \sum_{t=r}^{s} \bq^{\top}\bloss_t \right\}\,.
\end{equation}
The fact that this is a special case of~\eqref{eq:defregr} clearly emerges from the proof of Corollary~\ref{cor:adapt} below here.

Adaptive regret is an alternative way to measure the performance of a
forecaster against a changing environment.
It is a straightforward observation that adaptive regret bounds also lead to shifting regret bounds (in terms of
hard shifts). In this paper we note that these two notions of regret share an even tighter connection,
as they can be both viewed as instances of the same \textsl{alma mater} notion of regret, i.e.,
the generalized shifting regret introduced in Section~\ref{sec:target}.
The work \cite{hase09} essentially considered the case of online convex optimization
with exp-concave loss function; in case of general convex functions, they also mentioned that the greedy projection forecaster of \cite{Zink03} enjoys adaptive regret guarantees. This is obtained in much the same way as we obtain an adaptive regret bound for the fixed-share forecaster in the next result.
%
\begin{corollary}
\label{cor:adapt}
Suppose that Algorithm~\ref{alg:genupd} is run with the shared update~(\ref{eq:shareupdateclassicalfs}). Then
for all $T \geq 1$, for all sequences $\bloss_1,\dots,\bloss_T$ of loss
vectors $\bloss_t\in [0,1]^d$, and for all $\tau_0 \in \{1,\ldots,T\}$,
\[
\c R_T^{\tau_0-\mathrm{adapt}}
\leq \sqrt{\frac{\tau_0}{2} \left( \tau_0\,h\! \left(\frac{1}{\tau_0}\right) + \ln d \right)}
\leq \sqrt{\frac{\tau_0}{2} \, \ln (e d \tau_0)}
\]
whenever $\eta$ and $\alpha$ are chosen optimally (depending
on $\tau_0$ and $T$).
\end{corollary}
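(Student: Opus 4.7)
The plan is to derive the adaptive regret bound as a direct specialization of Corollary~\ref{cor:dtvopt}, by choosing a comparison sequence $\bu_1,\ldots,\bu_T$ that ``turns on'' the best fixed comparator over the critical interval and is zero elsewhere.

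Fix an interval $[r,s]\subset [1,T]$ with $s+1-r\le\tau_0$ and a vector $\bq\in\Delta_d$. I would set $\bu_t = \bq$ for $t\in[r,s]$ and $\bu_t=\mathbf{0}$ otherwise. Then $\|\bu_t\|_1 = \ind_{r\le t\le s}$, so the left-hand side of the bound in Corollary~\ref{cor:dtvopt} collapses to
\[
\sum_{t=1}^T \|\bu_t\|_1\,\bp_t^\top\bloss_t - \sum_{t=1}^T \bu_t^\top\bloss_t
= \sum_{t=r}^s \bigl(\bp_t - \bq\bigr)^\top\bloss_t,
\]
which is exactly the expression appearing inside the maximum defining $\c R_T^{\tau_0-\mathrm{adapt}}$ (after maximizing over $\bq\in\Delta_d$). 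Also, $\sum_{t=1}^T\|\bu_t\|_1 = s-r+1 \le \tau_0$, so we may take $U_0 = \tau_0$.

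Next I would bound $\|\bu_1\|_1 + m(\bu_1^T)$. Using the definition $D_{\mathrm{TV}}(\bx,\by)=\sum_{x_i\ge y_i}(x_i-y_i)$, note the asymmetry: the ``turn-on'' transition $\bu_{r-1}=\mathbf{0}\to\bu_r=\bq$ contributes $D_{\mathrm{TV}}(\bq,\mathbf{0}) = \|\bq\|_1 = 1$, while the ``turn-off'' transition $\bu_s=\bq\to\bu_{s+1}=\mathbf{0}$ contributes $D_{\mathrm{TV}}(\mathbf{0},\bq)=0$. A quick case check on whether $r=1$ and whether $s=T$ shows that in every case $\|\bu_1\|_1 + m(\bu_1^T) \le 1$, so we may take $m_0 = 1$.

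With $m_0=1$ and $U_0=\tau_0$, Corollary~\ref{cor:dtvopt} (with $\eta,\alpha$ tuned for these values, which only requires knowledge of $\tau_0$ and $T$) gives, uniformly over the choice of $[r,s]$ and $\bq$,
\[
\sum_{t=r}^s(\bp_t - \bq)^\top \bloss_t
\le \sqrt{\frac{\tau_0}{2}\Bigl(\ln d + \tau_0\, h(1/\tau_0)\Bigr)}.
\]
Taking the supremum over $\bq\in\Delta_d$ and over admissible intervals $[r,s]$ yields the first stated inequality. The second inequality then follows by substituting the handy bound $h(x)\le x\ln(e/x)$ at $x=1/\tau_0$, which gives $\tau_0\,h(1/\tau_0)\le \ln(e\tau_0)$ and hence $\ln d + \tau_0 h(1/\tau_0) \le \ln(ed\tau_0)$.

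The only delicate point is the asymmetric computation of $D_{\mathrm{TV}}$ at the ``turn-on'' and ``turn-off'' boundaries of the interval; this is precisely where the cost of a ``shift'' is paid only once (entering the interval), which is crucial for getting $m_0=1$ rather than $m_0=2$. Everything else is routine bookkeeping plus the cited entropy inequality.
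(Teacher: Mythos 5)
Your proposal is correct and follows essentially the same route as the paper: the same comparison sequence (equal to $\bq$ on $[r,s]$ and $\mathbf{0}$ elsewhere), the same use of the asymmetry of $D_{\mathrm{TV}}$ so that only the ``turn-on'' transition costs $1$, giving $\norm[\bu_1]_1 + m(\bu_1^T) = 1$ and hence $m_0 = 1$, $U_0 = \tau_0$, followed by the tuned bound of Corollary~\ref{cor:dtvopt} and the inequality $h(x)\leq x\ln(e/x)$.
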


As mentioned in~\cite{hase09}, standard lower bounds on the regret show that the obtained
bound is optimal up to the logarithmic factors.

\begin{proof}
For $1 \leq r \leq s \leq T$ and $\bq \in \Delta_d$, the regret in the
right-hand side of~\eqref{def:adapt} equals the regret considered
in Theorem~\ref{prop:dtv} against the sequence $\bu_1^T$
defined as $\bu_t = \bq$ for $t = r,\dots,s$ and $\b 0 = (0,\dots,0)$ for the remaining $t$.
When $r \geq 2$, this sequence is such that
$D_{\mathrm{TV}}(\bu_r,\bu_{r-1}) = D_{\mathrm{TV}}(\bq,\b0) = 1$ and $D_{\mathrm{TV}}(\bu_{s+1},\bu_{s})
= D_{\mathrm{TV}}(\b0,\bq) = 0$ so that $m(\bu_1^T) = 1$, while $\norm[\bu_1]_1 = 0$.
When $r = 1$, we have $\norm[\bu_1]_1 = 1$ and $m(\bu_1^T) = 0$. In all cases,
$m(\bu_1^T) + \norm[\bu_1]_1 = 1$, that is, $m_0 = 1$.
Specializing the bound of Theorem~\ref{prop:dtv} with the additional choice $U_0 = \tau_0$
gives the result.
\end{proof}


\paragraph{Discounted regret} was introduced in~\cite[Section 2.11]{CeLu} and is defined by
\begin{equation}
\label{eq:defdisc}
\max_{\bq \in \Delta_d} \, \sum_{t=1}^T \beta_{t,T} \, \bigl( \bp_t^{\top} \bloss_t - \bq^{\top} \bloss_t \bigr)~.
\end{equation}
The discount factors $\beta_{t,T}$ measure the relative importance of more recent losses to older losses.
For instance, for a given horizon $T$, the discounts $\beta_{t,T}$ may be larger as $t$ is closer to $T$.
On the contrary, in a game-theoretic setting, the earlier losses may matter more then the more recent ones (because of interest
rates), in which case $\beta_{t,T}$ would be smaller as $t$ gets closer to $T$. We mostly consider below
monotonic sequences of discounts (both non-decreasing and non-increasing). Up to a normalization,
we assume that all discounts $\beta_{t,T}$ are in $[0,1]$.
As shown in~\cite{CeLu}, a minimal requirement to get non-trivial bounds is that the
sum of the discounts satisfies $U_T = \sum_{t \leq T} \beta_{t,T} \to \infty$ as $T \to \infty$.

A natural objective is to show that the quantity in~\eqref{eq:defdisc} is $o(U_T)$,
for instance, by bounding it by something of the order of $\sqrt{U_T}$.
We claim that Corollary~\ref{cor:dtvopt} does so, at least
whenever the sequences $(\beta_{t,T})$ are monotonic for all $T$.
To support this claim, we only need to show that $m_0 = 1$ is a suitable value to
deal with~\eqref{eq:defdisc}. Indeed, for all $T \geq 1$ and for all $\bq \in \Delta_d$,
the measure of regularity involved in the corollary satisfies \vspace{-.7cm}

\[
\norm[\beta_{1,T}\bq]_1 + m\bigl( (\beta_{t,T}\bq)_{t \leq T} \bigr)
= \beta_{1,T} + \sum_{t=2}^T \bigl( \beta_{t,T}- \beta_{t-1,T} \bigr)_+
= \max\bigl\{\beta_{1,T}, \, \beta_{T,T} \bigr\} \leq 1\,, \vspace{-.3cm}
\]

\noindent where the second equality follows from the monotonicity assumption on the discounts.

The values of the discounts for all $t$ and $T$ are usually known in advance. However, the horizon $T$
is not. Hence, a calibration issue may arise. The online tuning of the parameters $\alpha$ and $\eta$ shown in
Section~\ref{sec:online} entails a forecaster that can get discounted regret bounds of the order $\sqrt{U_T}$
for all $T$. The fundamental reason for this is that the discounts only come in the definition of the fixed-share
forecaster via their sums. In contrast, the forecaster discussed in \cite[Section 2.11]{CeLu} weighs each instance $t$
directly with $\beta_{t,T}$ (i.e., in the very definition of the forecaster)
and enjoys therefore no regret guarantees for horizons other than $T$ (neither before $T$
nor after $T$). Therein, the knowledge of the horizon $T$ is so crucial that it cannot be dealt with easily, not even with
online calibration of the parameters or with a doubling trick. We insist that for the fixed-share forecaster,
much flexibility is gained as some of the discounts $\beta_{t,T}$ can change in a drastic manner for a round $T$
to values $\beta_{t,T+1}$ for the next round. However we must admit that the bound of \cite[Section 2.11]{CeLu}
is smaller than the one obtained above, as it of the order of $\sqrt{\sum_{t \leq T} \beta_{t,T}^2}$,
in contrast to our $\sqrt{\sum_{t \leq T} \beta_{t,T}}$ bound. Again, this improvement was made possible because of the knowledge
of the time horizon.

As for the comparison to the setting of discounted losses of~\cite{disc},
we note that the latter can be cast as a special case of our setting (since the discounting weights take
the special form  $\beta_{t,T} = \gamma_t \ldots \gamma_{T-1}$ therein, for some sequence $\gamma_s$ of positive numbers).
In particular, the fixed-share forecaster can satisfy the bound stated in~\cite[Theorem 2]{disc}, for instance,
by using the online tuning techniques of Section~\ref{sec:online}.
A final reference to mention is the setting of time-selection functions of \cite[Section~6]{blma07},
which basically corresponds to knowing in advance the weights $\norm[\bu_t]_1$ of the comparison
sequence $\bu_1,\ldots,\bu_T$ the forecaster will be evaluated against. We thus generalize their results as well.

\section{Refinements and extensions}

We now show that techniques for refining the standard online analysis can be easily applied to our framework.
We focus on the following: improvement for small losses, sparse target sequences, and dynamic tuning of parameters. Not all of them where within reach of previous analyses.

\subsection{Improvement for small losses}

The regret bounds of the fixed-share forecaster can
be significantly improved when the cumulative loss of the best sequence of experts is small. The next result improves on Corollary~\ref{cor:dtvopt} whenever $L_0 \ll U_0$. For concreteness, we focus on the fixed-share update~(\ref{eq:shareupdateclassicalfs}).
\begin{corollary}
\label{cor:smalllosses}
Suppose Algorithm~\ref{alg:genupd} is run with the
update~(\ref{eq:shareupdateclassicalfs}).
Let $m_0 > 0$, $U_0 > 0$, and $L_0 >0$. For all $T \geq 1$, for all sequences $\bloss_1,\dots,\bloss_T$
of loss vectors $\bloss_t\in [0,1]^d$, and for all sequences
$\bu_1,\dots,\bu_T \in \R_+^d$ with $\norm[\bu_1]_1 + m(\bu_1^T) \leq m_0$,
$\sum_{t=1}^T \norm[\b u_t]_1 \leq U_0$, and $\sum_{t=1}^T \bu_t^\top \bloss_t \leq L_0$,
\[
\sum_{t=1}^T  \norm[\bu_t]_1\,\bp_t^{\top}\bloss_t - \sum_{t=1}^T \bu_t^{\top}\bloss_t
\le \sqrt{L_0 \, m_0
\Biggl( \ln d + \ln \left(\frac{e \, U_0}{m_0}\right)\Biggr)} + \ln d + \ln \left(\frac{e \, U_0}{m_0}\right)
\]
whenever $\eta$ and $\alpha$ are optimally chosen in terms of $m_0$, $U_0$, and $L_0$.
\end{corollary}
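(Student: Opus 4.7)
The plan is to reproduce the arguments of Theorem~\ref{prop:dtv} and Corollary~\ref{cor:dtvopt}, but to replace Hoeffding's inequality~\eqref{eq:convexhoeffding} by a small-loss refinement exploiting $\bloss_t \in [0,1]^d$. The key elementary fact is that, for $\ell \in [0,1]$ and $\eta > 0$, convexity of $x \mapsto e^{-\eta x}$ on $[0,1]$ yields $e^{-\eta \ell} \le 1-(1-e^{-\eta})\ell$ (equivalently, $e^{-\eta\ell} \le 1 - \eta\ell + \eta^2\ell^2/2$ combined with $\ell^2 \le \ell$ gives $e^{-\eta\ell} \le 1 - \eta(1-\eta/2)\ell$). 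Averaging either bound against $\widehat{\mathbf{p}}_t$ and using $-\ln(1-x) \ge x$ produces the small-loss analog of Lemma~\ref{lem:b0}
\[
(\bp_t - \bq_t)^{\top}\bloss_t \le \Bigl(\tfrac{\eta}{1-e^{-\eta}}-1\Bigr)\bq_t^{\top}\bloss_t + \tfrac{1}{1-e^{-\eta}} \sum_{i=1}^d q_{i,t} \ln \tfrac{v_{i,t+1}}{\wh{p}_{i,t}},
\]
so that the Hoeffding residue $\eta/8$ is replaced by a term linear in $\bq_t^{\top}\bloss_t$, which after multiplying by $\|\bu_t\|_1$ becomes linear in $\bu_t^{\top}\bloss_t$ and sums to $L_0$.

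Next I would sum over $t=1,\dots,T$ and apply the telescoping/shift analysis of the proof of Theorem~\ref{prop:dtv} verbatim, since that argument manipulates only the $\sum_t \sum_i u_{i,t} \ln(v_{i,t+1}/\wh{p}_{i,t})$ piece and is indifferent to how the other terms arose. Setting $\phi(\eta) = \eta/(1-e^{-\eta})-1$, this yields
\[
\sum_{t=1}^T \|\bu_t\|_1 \bp_t^{\top}\bloss_t - \sum_{t=1}^T \bu_t^{\top}\bloss_t \le \phi(\eta) L_0 + \tfrac{1}{1-e^{-\eta}}\Bigl(\|\bu_1\|_1 \ln d + m(\bu_1^T) \ln \tfrac{d}{\alpha} + \bigl(\textstyle\sum_{t\ge 2}\|\bu_t\|_1 - m(\bu_1^T)\bigr) \ln \tfrac{1}{1-\alpha}\Bigr).
\]
Tuning $\alpha$ exactly as in the proof of Corollary~\ref{cor:dtvopt} (take $\alpha \approx m_0/U_0$) bounds the parenthesised quantity $A$ by $m_0 \ln d + U_0\,h(m_0/U_0) \le m_0 K_0$ where $K_0 := \ln d + \ln(eU_0/m_0)$ is the quantity that appears additively in the claim.

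For the final tuning of $\eta$ I would invoke the elementary bounds $1-e^{-\eta} \ge \eta(1-\eta/2)$ and $\eta - 1 + e^{-\eta} \le \eta^2/2$, giving $\phi(\eta) \le \tfrac{\eta/2}{1-\eta/2}$ and $\tfrac{1}{1-e^{-\eta}} \le \tfrac{1}{\eta(1-\eta/2)}$. The regret is then at most $\tfrac{1}{1-\eta/2}\bigl(\tfrac{\eta L_0}{2} + \tfrac{m_0 K_0}{\eta}\bigr)$. Balancing at $\eta^{\star} = \sqrt{2 m_0 K_0/L_0}$ reduces the bracket to $\sqrt{2 L_0 m_0 K_0}$, and expanding $\tfrac{1}{1-\eta^\star/2} \le 1 + \tfrac{\eta^\star/2}{1-\eta^\star/2}$ isolates the square-root part from an additive correction of the form $\tfrac{m_0 K_0}{1-\eta^\star/2}$. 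A case split between $L_0 \ge 2 m_0 K_0$ (where $\eta^\star \le 1$ and $1-\eta^\star/2 \ge 1/2$) and $L_0 < 2 m_0 K_0$ (where one simply takes $\eta = 1$ and the regret is absorbed into the additive term) then collapses this correction to a term comparable to $K_0$.

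The main obstacle I foresee is this last cosmetic collapse: stripping the extra $\sqrt 2$ in front of the square-root, and the extra $m_0$ in front of the additive $K_0$, so as to match the statement exactly. Doing so seems to require either a sharper estimate for $1/(1-e^{-\eta})$ than the Taylor-based one used above (e.g.\ $1/(1-e^{-\eta}) \le 1/\eta + 1/2 + \eta/12$, carrying the $\eta/12$ correction carefully), or a self-confident tuning of the form $\eta = \sqrt{2m_0 K_0/(L_0 + m_0 K_0)}$, under which the factor $1-\eta/2$ is bounded below by something comparable to $\sqrt{L_0/(L_0 + m_0 K_0)}$, causing the product $\sqrt{2 L_0 m_0 K_0}/(1-\eta/2)$ to simplify to the announced $\sqrt{L_0 m_0 K_0} + K_0$.
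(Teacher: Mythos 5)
Your route is exactly the one the paper intends (the paper itself only sketches it): your small-loss analogue of Lemma~\ref{lem:b0} is precisely the unnumbered lemma stated right after the corollary --- note in passing that the ratio there should read $v_{i,t+1}/\wh{p}_{i,t}$, as in your version, not $v_{i,t}/\wh{p}_{i,t+1}$ --- and feeding it into the telescoping argument of Theorem~\ref{prop:dtv} together with the $\alpha$-tuning of Corollary~\ref{cor:dtvopt} is what the paper means by ``mimicking the analysis of Hedge forecasters for small losses''. Everything up to the bound
\[
\sum_{t=1}^T \|\bu_t\|_1\,\bp_t^{\top}\bloss_t - \sum_{t=1}^T \bu_t^{\top}\bloss_t
\;\le\; \phi(\eta)\,L_0 + \frac{m_0 K_0}{1-e^{-\eta}},
\qquad \phi(\eta)=\frac{\eta}{1-e^{-\eta}}-1,\quad K_0=\ln d+\ln\frac{e\,U_0}{m_0},
\]
is correct and matches the paper's plan.

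The obstacle you flag at the end, however, is not cosmetic, and neither of your two proposed fixes can remove it. Writing $s=1-e^{-\eta}$ and using $-\ln(1-s)\ge s+s^2/2$ gives $\phi(\eta)\ge s/2$, so for \emph{every} $\eta>0$ the right-hand side above is at least $\tfrac{s}{2}L_0+\tfrac{m_0K_0}{s}\ge\sqrt{2\,L_0\,m_0K_0}$; hence no choice of $\eta$ --- sharper expansions of $1/(1-e^{-\eta})$, self-confident tuning, or otherwise --- can get below $\sqrt{2\,L_0\,m_0K_0}$, nor can it reduce the additive term below the order of $m_0K_0$. What this argument actually delivers is, with the standard small-loss tuning $\eta=\ln\bigl(1+\sqrt{2m_0K_0/L_0}\bigr)$ as in \cite[Section~2.4]{CeLu}, the bound $\sqrt{2\,L_0\,m_0K_0}+m_0K_0$. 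The constants displayed in the corollary ($\sqrt{L_0m_0K_0}$ with no factor $2$, and an additive $K_0$ rather than $m_0K_0$) are therefore not reachable along this route; since the paper omits the proof, the discrepancy appears to lie in the paper's statement rather than in your derivation. You should either prove and state the result as $\sqrt{2\,L_0\,m_0K_0}+m_0K_0$, or present the displayed inequality as holding only up to these factors.
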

Here again, the parameters $\alpha$ and $\eta$ may be tuned online using the techniques shown in Section~$\ref{sec:online}$.
The above refinement is obtained by mimicking the analysis of Hedge forecasters for  small losses (see, e.g., \cite[Section~2.4]{CeLu}).
In particular, one should substitute Lemma~\ref{lem:b0} with the following lemma in the analysis carried out in
Section~\ref{sec:FS}; its proof follows from the mere replacement of Hoeffding's inequality
by \cite[Lemma A.3]{CeLu}, which states that for all $\eta \in \R$ and for all random variable $X$ taking values in $[0,1]$,
one has $\ln \E[e^{-\eta X}] \leq (e^{-\eta} -1) \E X$.
\begin{lemma}
Algorithm \ref{alg:genupd} satisfies
$\displaystyle{
\frac{1-e^{-\eta}}{\eta}  \bp_{t}^\top \bloss_t -\bq_t^\top \bloss_{t} \leq \frac{1}{\eta} \sum_{i=1}^d q_{i,t}  \ln \left( \frac{v_{i,t}}{\hat p_{i,t+1}} \right)}$ for all $\bq_t \in \Delta_d$.
\end{lemma}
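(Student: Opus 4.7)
The plan is to imitate the proof of Lemma~\ref{lem:b0} step by step, with the single change being that Hoeffding's inequality~\eqref{eq:convexhoeffding} is swapped for \cite[Lemma~A.3]{CeLu}. Concretely, I would begin by applying \cite[Lemma~A.3]{CeLu} to the random variable $X$ that equals $\loss_{j,t}$ with probability $\wh{p}_{j,t}$ (which takes values in $[0,1]$ because each $\loss_{j,t}\in[0,1]$). This gives the cumulant bound
\[
\ln\!\left(\sum_{j=1}^d \wh{p}_{j,t}\,e^{-\eta \loss_{j,t}}\right) \leq (e^{-\eta}-1)\sum_{j=1}^d \wh{p}_{j,t}\,\loss_{j,t}\,.
\]

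Next, since $e^{-\eta}-1<0$ for $\eta>0$, I would divide through by $-(1-e^{-\eta})$ (flipping the inequality) and then divide by $\eta$, obtaining the analogue of~\eqref{eq:convexhoeffding}:
\[
\frac{1-e^{-\eta}}{\eta}\,\bp_t^{\top}\bloss_t \leq -\frac{1}{\eta}\ln\!\left(\sum_{j=1}^d \wh{p}_{j,t}\,e^{-\eta \loss_{j,t}}\right).
\]

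Then, exactly as in the proof of Lemma~\ref{lem:b0}, I would use the definition of the pre-weights $v_{i,t+1}=\wh{p}_{i,t}\,e^{-\eta\loss_{i,t}}\bigl/\sum_{j}\wh{p}_{j,t}\,e^{-\eta\loss_{j,t}}$ to rewrite the denominator inside the logarithm as $\wh{p}_{i,t}\,e^{-\eta\loss_{i,t}}/v_{i,t+1}$ for any fixed index $i$. This yields, for every $i=1,\dots,d$,
\[
\frac{1-e^{-\eta}}{\eta}\,\bp_t^{\top}\bloss_t \leq \loss_{i,t} + \frac{1}{\eta}\ln\frac{v_{i,t+1}}{\wh{p}_{i,t}}\,.
\]
Finally, I would take the convex combination of these $d$ inequalities with weights $q_{i,t}$ (using $\sum_i q_{i,t}=1$) and rearrange to recover the stated bound. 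There is no real obstacle here: the only conceptual point is recognizing that \cite[Lemma~A.3]{CeLu} is the correct drop-in replacement for Hoeffding in the small-loss regime, since it keeps the linear-in-loss factor $\sum_j \wh{p}_{j,t}\loss_{j,t}$ on the right-hand side of the cumulant bound rather than absorbing it into the additive $\eta/8$ term. Note incidentally that the stated inequality in the lemma appears to contain a small typo in the indices inside the logarithm (it should read $v_{i,t+1}/\wh{p}_{i,t}$, matching Lemma~\ref{lem:b0}); the argument above proves precisely that corrected version.
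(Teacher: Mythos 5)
Your proof is correct and follows exactly the route the paper intends: the paper's own justification is the one-line remark that one should ``replace Hoeffding's inequality by [Lemma~A.3 of~\cite{CeLu}]'' in the proof of Lemma~\ref{lem:b0}, which is precisely what you carry out in detail. Your observation that the indices inside the logarithm in the stated lemma are a typo (they should read $v_{i,t+1}/\wh{p}_{i,t}$, matching Lemma~\ref{lem:b0}) is also correct, and is exactly the version your argument establishes.
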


\subsection{Sparse target sequences}
\label{sec:bowa}

The work~\cite{bowa} introduced forecasters that are able to efficiently
compete with the best sequence of experts among all those
sequences that only switch a bounded number of times and also
take a small number of different values. Such ``sparse'' sequences
of experts appear naturally in many applications. In this section
we show that their algorithms in fact work very well in comparison
with a much larger class of sequences $\bu_1,\ldots,\bu_T$
that are ``regular''---that is, $m(\bu_1^T)$, defined
in (\ref{eq:regularity}) is small---and ``sparse'' in the sense
that the quantity $n(\bu_1^T) = \sum_{i=1}^d \max_{t=1,\dots,T} u_{i,t}$
is small. Note that when $\bq_t \in \Delta_d$ for all $t$, then two
interesting upper bounds can be provided. First, denoting the union of
the supports of these convex combinations by $S \subseteq \{ 1,\ldots,d \}$,
we have $n(\b q_1^T) \le |S|$, the cardinality of $S$. Also, $n(\bq_1^T) \leq \bigl| \{ \bq_t, \ \ t = 1,\ldots,T \} \bigr|$,
the cardinality of the pool of convex combinations.
Thus, $n(\bu_1^T)$ generalizes the notion of sparsity
of \cite{bowa}.

Here we consider a family of shared updates of the form
\begin{equation}
\label{eq:shareupdatebw}
\wh{p}_{j,t} = (1-\alpha) v_{j,t} + \alpha \frac{w_{j,t}}{Z_t}\,,
\qquad
    0 \le \alpha \le 1\,,
\end{equation}
where the $w_{j,t}$ are nonnegative weights that may depend on
past and current pre-weights and
$Z_t = \sum_{i=1}^d w_{i,t}$ is a normalization constant.
Shared updates of this form were proposed by
\cite[Sections~3 and~5.2]{bowa}.
Apart from generalizing the regret bounds of \cite{bowa}, we believe
that the analysis given below is significantly simpler and more transparent.
We are also able to slightly improve their original bounds.

We focus on choices of the weights $w_{j,t}$ that satisfy the
following conditions:
there exists a constant $C \geq 1$ such that for all
$j=1,\dots,d$ and $t=1,\dots,T$,
\begin{equation}
\label{eq:cdw}
v_{j,t} \leq w_{j,t} \leq 1 \qquad \text{and} \qquad C \, w_{j,t+1} \geq w_{j,t}~.
\end{equation}
The next result improves on Theorem~\ref{prop:dtv}
when  $T \ll d$ and $n (\b u_1^T) \ll m(\b u_1^T)$, that is,
when the dimension (or number of experts) $d$ is large but
the sequence $\bu_1^T$ is sparse. Its proof can be found in
the supplementary material; it is a variation on the
proof of Theorem~\ref{prop:dtv}.
\begin{theorem}
\label{lem:tubeboundbw1}
Suppose Algorithm~\ref{alg:genupd} is run with the
shared update~(\ref{eq:shareupdatebw}) with weights satisfying
the conditions~\eqref{eq:cdw}.
Then for all $T \geq 1$, for all sequences $\bloss_1,\dots,\bloss_T$
of loss vectors $\bloss_t\in [0,1]^d$, and for all sequences
$\bu_1,\dots,\bu_T\in\R_+^d$,
\begin{align*}
 \sum_{t=1}^T \norm[\bu_t]_1\,\bp_t^{\top}\bloss_t - \sum_{t=1}^T\bu_t^{\top}\bloss_t
\le & \, \frac{n(\b u_1^T) \ln d}{\eta} + \frac{n(\b u_1^T) \, T \ln C}{\eta} + \frac{\eta}{8} \sum_{t=1}^T
\norm[\b u_t]_1 \\
&+ \frac{m(\bu_1^T)}{\eta} \ln \frac{\max_{t \leq T} Z_t}{\alpha} + \frac{\sum_{t=2}^T \norm[\b u_t]_1-m(\bu_1^T)}{\eta} \ln \frac{1}{1-\alpha}~.
\end{align*}
\end{theorem}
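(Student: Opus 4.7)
The plan is to mimic the proof of Theorem~\ref{prop:dtv} as closely as possible, keeping the same overall skeleton but adapting each step to the structure of the new shared update~\eqref{eq:shareupdatebw}. I would first apply Lemma~\ref{lem:b0} with $\bq_t = \bu_t/\|\bu_t\|_1$ and multiply by $\|\bu_t\|_1$ to obtain, exactly as in~\eqref{eq:proofdtv0}, the per-round inequality bounding $\|\bu_t\|_1 \bp_t^{\top}\bloss_t - \bu_t^{\top}\bloss_t$ by $(1/\eta)\sum_i u_{i,t}\ln(v_{i,t+1}/\wh{p}_{i,t})+(\eta/8)\|\bu_t\|_1$. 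The $\eta/8$ contributions sum directly to the $\frac{\eta}{8}\sum_t\|\bu_t\|_1$ piece, and the theorem reduces to an upper bound on $\sum_{t=1}^T\sum_i u_{i,t}\ln(v_{i,t+1}/\wh{p}_{i,t})$.

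For this log-ratio sum I would reuse the two-piece decomposition~\eqref{eq:proofdtv1}. The ``telescoping'' second piece collapses after summing over $t\geq2$ to $\sum_i u_{i,1}\ln(1/v_{i,2}) - \sum_i u_{i,T}\ln(1/v_{i,T+1})$, where the second term is $\le 0$ and is discarded; combining the first one with the $t=1$ case of~\eqref{eq:proofdtv0} produces $\sum_i u_{i,1}\ln(1/\wh{p}_{i,1})=\|\bu_1\|_1\ln d$, which I upper-bound by $n(\bu_1^T)\ln d$ using $u_{i,1}\leq\max_s u_{i,s}$. The ``change'' first piece of~\eqref{eq:proofdtv1} is then split by the sign of $u_{i,t}-u_{i,t-1}$ as in~\eqref{eq:proofdtv2}, but with the simplex-projection bounds $1/\wh{p}_{i,t}\leq d/\alpha$ replaced by the bounds that follow directly from~\eqref{eq:shareupdatebw}: $\wh{p}_{i,t}\geq(1-\alpha)v_{i,t}$ still yields $v_{i,t}/\wh{p}_{i,t}\leq 1/(1-\alpha)$, while the new inequality $\wh{p}_{i,t}\geq\alpha w_{i,t}/Z_t$ gives $1/\wh{p}_{i,t}\leq Z_t/(\alpha w_{i,t})$. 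Plugging these in and summing, the $1/(1-\alpha)$ bounds reproduce verbatim the $\bigl(\sum_{t=2}^T\|\bu_t\|_1-m(\bu_1^T)\bigr)\ln(1/(1-\alpha))$ term, the $\ln(Z_t/\alpha)$ part sums to the $m(\bu_1^T)\ln(\max_{t\le T}Z_t/\alpha)$ term, and a new additive contribution
\[
\mathcal{E}\;:=\;\sum_{t=2}^{T}\sum_{i\,:\,u_{i,t}\geq u_{i,t-1}}(u_{i,t}-u_{i,t-1})\ln\frac{1}{w_{i,t}}
\]
appears from the $-\ln w_{i,t}$ piece of $\ln\bigl(Z_t/(\alpha w_{i,t})\bigr)$.

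The main obstacle is proving that $\mathcal{E}\leq n(\bu_1^T)\,T\ln C$. The plan is to exploit the condition $Cw_{i,t+1}\geq w_{i,t}$, which says that $\ln(1/w_{i,t})$ can increase by at most $\ln C$ from one round to the next; equivalently, the rescaled quantity $\widetilde w_{i,t}:=w_{i,t}C^{t}$ is nondecreasing in $t$. Setting $M_i:=\max_{s\leq T}u_{i,s}$ so that $\sum_i M_i=n(\bu_1^T)$, I would introduce the potential $\Psi_t=\sum_i M_i\ln(1/w_{i,t})$, which by the above observation satisfies $\Psi_{t+1}-\Psi_t\leq n(\bu_1^T)\ln C$ at every round. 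An Abel-summation rearrangement of $\mathcal{E}$, pairing each rise $(u_{i,t}-u_{i,t-1})_+$ with $\ln(1/w_{i,t})$ and dominating it by $M_i$ only at the right (telescoping) places rather than everywhere, should then give $\mathcal{E}\leq\Psi_T-\Psi_1+\text{(absorbable remainder)}$, hence $\mathcal{E}\leq n(\bu_1^T)\,T\ln C$ up to terms that merge with $n(\bu_1^T)\ln d$.

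Putting all four contributions together and dividing by $\eta$ yields the claimed inequality. The delicate point is really Step~3 above: $Cw_{t+1}\geq w_t$ controls only the \emph{downward} drift of $w_{i,t}$, whereas $w_{i,t}$ may jump upward arbitrarily between rounds, so a naive telescoping on $\ln(1/w_{i,t})$ does not immediately give a bound of the desired form; the correct accounting requires choosing the majorising potential weighted by the sparsity masses $M_i$ rather than by the running weights $u_{i,t}$ themselves, which is exactly what makes the sparsity measure $n(\bu_1^T)$ appear in place of $m(\bu_1^T)$.
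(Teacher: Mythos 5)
Your outline matches the paper's proof up to one deviation, and that deviation is fatal. When splitting~\eqref{eq:proofdtv2} by the sign of $u_{i,t}-u_{i,t-1}$, you follow the template of Theorem~\ref{prop:dtv} and discard the falling terms $(u_{i,t}-u_{i,t-1})\ln(1/v_{i,t})$ for $i$ with $u_{i,t}<u_{i,t-1}$, on the grounds that they are nonpositive. The paper instead \emph{retains} them, upper bounding each one by $(u_{i,t}-u_{i,t-1})\ln(1/w_{i,t})$ via the first half of condition~\eqref{eq:cdw}, $v_{i,t}\leq w_{i,t}$: since $u_{i,t}-u_{i,t-1}\leq 0$ and $0\leq\ln(1/w_{i,t})\leq\ln(1/v_{i,t})$, this is indeed an upper bound. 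Combined with the $\ln(1/w_{i,t})$ pieces coming from the rising indices via $1/\wh{p}_{i,t}\leq Z_t/(\alpha w_{i,t})$, this produces the \emph{full signed} sum $\sum_{i=1}^d(u_{i,t}-u_{i,t-1})\ln(1/w_{i,t})$ over all $i$, and it is that sum which the Abel transform can handle: after rearrangement only the nonnegative increments $\ln(Cw_{i,t+1}/w_{i,t})$ appear, so $u_{i,t}$ may safely be dominated by $M_i=\max_s u_{i,s}$, and the total telescopes to at most $n(\bu_1^T)(\ln d + T\ln C) - \norm[\bu_1]_1\ln d$.

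Your one-sided quantity $\mathcal{E}$, restricted to rising indices, genuinely cannot be bounded by $n(\bu_1^T)\,T\ln C$ plus absorbable terms, and no choice of potential or Abel rearrangement will fix it. Take $d=2$, $\bu_t$ alternating between $(1,0)$ and $(0,1)$, losses $\ell_{1,t}\equiv 1$, $\ell_{2,t}\equiv 0$, and $w_{j,t}=\max_{s\leq t}e^{\gamma(s-t)}v_{j,s}$ with $\gamma$ small (so $C=e^\gamma$). Then $\ln(1/w_{1,t})\approx\gamma t+\ln d$, and $u_{1,\cdot}$ rises by one roughly $T/2$ times, so $\mathcal{E}\gtrsim\sum_{k\leq T/2}\gamma\cdot 2k=\Theta(\gamma T^2)$, while $n(\bu_1^T)\,T\ln C=2\gamma T$. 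The excess is exactly what the discarded falling contributions would cancel: each fall of $u_{1,\cdot}$ immediately after a rise contributes a compensating $-\ln(1/w_{1,t+1})\approx -\gamma(t+1)-\ln d$, and keeping these makes the sum linear rather than quadratic in $T$. Converting the falling terms into $w$-ratios via $v_{j,t}\leq w_{j,t}$ is the entire purpose of that half of~\eqref{eq:cdw}; without this step the argument does not close.
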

Corollaries~8 and~9 of~\cite{bowa} can now be generalized (and even improved); we do so---in the supplementary
material---by showing two
specific instances of the generic update~\eqref{eq:shareupdatebw} that
satisfy~\eqref{eq:cdw}.

\subsection{Online tuning of the parameters}
\label{sec:online}

The forecasters studied above need their parameters $\eta$ and $\alpha$
to be tuned according to various quantities, including the time horizon $T$.
We show here how the trick of~\cite{AuCeGe02} of having these parameters vary over time
can be extended to our setting. For the sake of concreteness
we focus on the fixed-share update, i.e., Algorithm~\ref{alg:genupd} run with the
update~(\ref{eq:shareupdateclassicalfs}).
We respectively replace steps 3 and 4 of its description by
the loss and shared updates
\begin{equation}
\label{eq:timevaryingupdate}
v_{j,t+1} = \wh{p}_{j,t}^{\,\frac{\eta_t}{\eta_{t-1}}} e^{-\eta_t \ell_{j,t}} \!\left/
\sum_{i=1}^d \wh{p}_{i,t}^{\,\frac{\eta_t}{\eta_{t-1}}} e^{-\eta_t \ell_{i,t}} \right.
\qquad \mbox{and} \qquad
p_{j,t+1} = \frac{\alpha_t}{d} + (1-\alpha_t) \, v_{j,t+1} \,,
\end{equation}
for all $t \geq 1$ and all $j \in \{ 1,\dots,d\}$,
where $(\eta_\tau)$ and $(\alpha_\tau)$ are two sequences of positive numbers,
indexed by $\tau \geq 1$. We also conventionally define $\eta_0 = \eta_1$.
Theorem~\ref{prop:dtv} is then adapted in the following way
(when $\eta_t \equiv \eta$ and $\alpha_t \equiv \alpha$,
Theorem~\ref{prop:dtv} is exactly recovered).
\begin{theorem}
\label{prop:dtv-adapt}
The forecaster based on the updates~\eqref{eq:timevaryingupdate} is such that whenever
$\eta_t \leq \eta_{t-1}$ and $\alpha_t \leq \alpha_{t-1}$ for all $t \geq 1$, the following performance bound
is achieved. For all $T \geq 1$, for all sequences $\bloss_1,\dots,\bloss_T$ of loss
vectors $\bloss_t\in [0,1]^d$, and for all $\bu_1,\dots,\bu_T \in \R_+^d$,
\begin{align*}
\sum_{t=1}^T  \norm[\bu_t]_1\,\bp_t^{\top}\bloss_t - \sum_{t=1}^T \bu_t^{\top}\bloss_t
& \le \left( \frac{\norm[\b u_t]_1}{\eta_1} + \sum_{t=2}^T \norm[\b u_t]_1 \left(\frac{1}{\eta_t}- \frac{1}{\eta_{t-1}} \right) \right) \! \ln d \\
& \ \ + \frac{m(\bu_1^T)}{\eta_T}\ln \frac{d(1-\alpha_T)}{\alpha_T} +
\sum_{t=2}^T \frac{\norm[\b u_t]_1}{\eta_{t-1}} \ln \frac{1}{1-\alpha_t}
+ \sum_{t=1}^T \frac{\eta_{t-1}}{8} \, \norm[\b u_t]_1\,.
\end{align*}
\end{theorem}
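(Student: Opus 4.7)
The plan is to mirror the proof of Theorem~\ref{prop:dtv} with two modifications that accommodate the time-varying parameters: a sharpened per-round bound in place of Lemma~\ref{lem:b0}, and an $\eta$-weighted version of the telescoping decomposition. Specifically, I would first establish, for each $t \ge 1$, the per-round inequality
\[
\|\bu_t\|_1\, \bp_t^\top \bloss_t - \bu_t^\top \bloss_t
\le
\|\bu_t\|_1 \!\left(\tfrac{1}{\eta_t} - \tfrac{1}{\eta_{t-1}}\right)\!\ln d
+ \tfrac{\eta_{t-1}}{8}\|\bu_t\|_1
+ \sum_{i=1}^d u_{i,t}\!\left(\tfrac{1}{\eta_t}\ln v_{i,t+1} - \tfrac{1}{\eta_{t-1}}\ln \widehat{p}_{i,t}\right)\!,
\]
and then carry out the same telescoping as in the proof of Theorem~\ref{prop:dtv} on the last sum.

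The per-round inequality would be obtained by applying Hoeffding's inequality with learning rate $\eta_{t-1}$, namely $\bp_t^\top \bloss_t \le -\tfrac{1}{\eta_{t-1}} \ln \sum_j \widehat{p}_{j,t}\, e^{-\eta_{t-1}\loss_{j,t}} + \tfrac{\eta_{t-1}}{8}$, and then rewriting each summand as $\widehat{p}_{j,t}\, e^{-\eta_{t-1}\loss_{j,t}} = \bigl(\widehat{p}_{j,t}^{\,\eta_t/\eta_{t-1}}\, e^{-\eta_t \loss_{j,t}}\bigr)^{\eta_{t-1}/\eta_t}$. The power-mean inequality with exponent $\eta_{t-1}/\eta_t \ge 1$ (licit by the monotonicity assumption) then yields $\sum_j \widehat{p}_{j,t}\, e^{-\eta_{t-1}\loss_{j,t}} \ge d^{\,1 - \eta_{t-1}/\eta_t}\, \widetilde{Z}_t^{\,\eta_{t-1}/\eta_t}$, where $\widetilde{Z}_t = \sum_j \widehat{p}_{j,t}^{\,\eta_t/\eta_{t-1}}\, e^{-\eta_t \loss_{j,t}}$ is the normalizer appearing in~\eqref{eq:timevaryingupdate}. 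Taking $-\ln$ produces the desired additive $(1/\eta_t - 1/\eta_{t-1})\ln d$ correction and rescales the coefficient of $\ln \widetilde{Z}_t$ from $1/\eta_{t-1}$ to $1/\eta_t$. Combining this with the identity $\loss_{i,t} = \tfrac{1}{\eta_{t-1}}\ln \widehat{p}_{i,t} - \tfrac{1}{\eta_t}\ln v_{i,t+1} - \tfrac{1}{\eta_t}\ln \widetilde{Z}_t$ read off from the loss update, the two $\ln \widetilde{Z}_t$ terms cancel once I take a convex combination against $\bq_t = \bu_t/\|\bu_t\|_1$ and multiply through by $\|\bu_t\|_1$.

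With the per-round bound in hand, I would sum it over $t = 1,\ldots,T$ and handle the residual sum $\sum_i u_{i,t}\bigl(\tfrac{1}{\eta_t}\ln v_{i,t+1} - \tfrac{1}{\eta_{t-1}}\ln \widehat{p}_{i,t}\bigr)$ by adding and subtracting $\tfrac{u_{i,t-1}}{\eta_{t-1}}\ln v_{i,t}$, exactly as in the proof of Theorem~\ref{prop:dtv}. This splits the summand into a telescoping piece in $\tfrac{u_{i,t}}{\eta_t} \ln v_{i,t+1}$---whose $t = T$ boundary is nonpositive since $v_{i,T+1}\le 1$, and whose $t = 1$ boundary combines with the initialization $\widehat{p}_{i,1} = 1/d$ (via the convention $\eta_0 = \eta_1$) to produce the leading term $\|\bu_1\|_1 \ln d/\eta_1$---and a residual $\tfrac{1}{\eta_{t-1}}\sum_i \bigl(u_{i,t}\ln(1/\widehat{p}_{i,t}) - u_{i,t-1}\ln(1/v_{i,t})\bigr)$ that is bounded verbatim as in Theorem~\ref{prop:dtv} using $1/\widehat{p}_{i,t} \le d/\alpha_{t-1}$ and $v_{i,t}/\widehat{p}_{i,t} \le 1/(1-\alpha_{t-1})$, yielding the per-round contribution $\tfrac{D_{\mathrm{TV}}(\bu_t, \bu_{t-1})}{\eta_{t-1}}\ln(d/\alpha_{t-1}) + \tfrac{\|\bu_t\|_1 - D_{\mathrm{TV}}(\bu_t, \bu_{t-1})}{\eta_{t-1}}\ln\tfrac{1}{1-\alpha_{t-1}}$. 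The final step is bookkeeping: the monotonicity assumptions $\eta_t \le \eta_{t-1}$ and $\alpha_t \le \alpha_{t-1}$ allow one to consolidate the $D_{\mathrm{TV}}$-weighted contributions into $\tfrac{m(\bu_1^T)}{\eta_T}\ln\tfrac{d(1-\alpha_T)}{\alpha_T}$ (using $\ln\tfrac{d(1-\alpha_T)}{\alpha_T} = \ln(d/\alpha_T) - \ln\tfrac{1}{1-\alpha_T}$ to redistribute a $D_{\mathrm{TV}}/\eta_T$ factor between the two stated terms) and the $\|\bu_t\|_1$-weighted contributions into $\sum_{t=2}^T \tfrac{\|\bu_t\|_1}{\eta_{t-1}}\ln\tfrac{1}{1-\alpha_t}$. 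The only new ingredient relative to Theorem~\ref{prop:dtv} is the power-mean step in the per-round bound: it is the sole place where the time variation of $\eta_t$ interacts nontrivially with the analysis, and it cleanly absorbs the effect of the rescaling $\widehat{p}_{i,t}^{\,\eta_t/\eta_{t-1}}$ into an additive $\ln d$ correction at each round.
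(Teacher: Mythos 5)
Your plan is essentially the paper's own proof: your sharpened per-round bound is exactly Lemma~\ref{lem:b0adapt} (the power-mean step with exponent $\eta_{t-1}/\eta_t\ge 1$ is precisely the paper's Jensen step for $x\mapsto x^{\eta_{t-1}/\eta_t}$, and the cancellation of $\ln\widetilde Z_t$ via the loss-update identity is the same computation), and the $\eta$-weighted add-and-subtract of $\frac{u_{i,t-1}}{\eta_{t-1}}\ln(1/v_{i,t})$, the telescoping with nonpositive $t=T$ boundary, and the cancellation of the $t=1$ boundary against the initialization $\widehat p_{i,1}=1/d$ (with $\eta_0=\eta_1$) reproduce display~\eqref{eq:deb2} and the steps that follow it. So this is the same route, correctly executed up to the last bookkeeping step.

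That last step, however, does not go through as you state it. Reading~\eqref{eq:timevaryingupdate} literally, $\widehat p_{i,t}=\alpha_{t-1}/d+(1-\alpha_{t-1})v_{i,t}$, so your per-round residual carries $\ln(d/\alpha_{t-1})$ and $\ln\frac{1}{1-\alpha_{t-1}}$. Consolidating the $D_{\mathrm{TV}}$-weighted part into $\frac{m(\bu_1^T)}{\eta_T}\ln\frac{d(1-\alpha_T)}{\alpha_T}$ is fine, since $\eta_{t-1}\ge\eta_T$, $\alpha_{t-1}\ge\alpha_T$ and $x\mapsto(1-x)/x$ is decreasing on $(0,1]$; but your claim that monotonicity turns the leftover $\sum_{t\ge 2}\frac{\|\bu_t\|_1}{\eta_{t-1}}\ln\frac{1}{1-\alpha_{t-1}}$ into $\sum_{t\ge 2}\frac{\|\bu_t\|_1}{\eta_{t-1}}\ln\frac{1}{1-\alpha_t}$ goes in the wrong direction: $\alpha_t\le\alpha_{t-1}$ makes $\ln\frac{1}{1-\alpha_{t-1}}$ the \emph{larger} quantity, so the stated bound does not follow from your chain. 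The paper sidesteps this by asserting $1/\widehat p_{i,t}\le d/\alpha_t$ and $v_{i,t}/\widehat p_{i,t}\le 1/(1-\alpha_t)$ directly, i.e.\ by indexing the shared update so that the mixing parameter entering $\widehat p_{i,t}$ is $\alpha_t$ itself (with the update written exactly as in~\eqref{eq:timevaryingupdate} only the first of these two inequalities follows from monotonicity, so there is an off-by-one to reconcile on either route). With your indexing you must either keep $\alpha_{t-1}$ in the final bound --- which is weaker and degenerates at $t=2$ for choices such as $\alpha_t=1/t$ used in Corollary~\ref{cor:adapt-adapt} --- or shift the index in the shared update so that the per-round factor is $\ln\frac{1}{1-\alpha_t}$ from the start; once that convention is fixed, your argument coincides with the proof of Theorem~\ref{prop:dtv-adapt} verbatim.
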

Due to space constraints, we provide an illustration of this bound only
in the supplementary material.

\subsubsection*{Acknowledgments}

The authors acknowledge support from the French National Research Agency (ANR) under grant
EXPLO/RA (``Exploration--exploitation for efficient resource allocation'') and by the PASCAL2 Network of
Excellence under EC grant no. 506778.

\newpage
\bibliographystyle{unsrt}
\bibliography{CBGaLuSt-bib}

\newpage
\appendix


\section{Online convex optimization on the simplex}
\label{sec:convex}

By using a standard reduction, the results of the main body of the paper (for linear optimization on the simplex)
can be applied to online convex optimization on the simplex. In this setting, at each step $t$ the forecaster chooses $\bp_t\in\Delta_d$ and then is given access to a convex loss $\loss_t : \Delta_d\to [0,1]$. Now, using Algorithm~\ref{alg:genupd} with the loss vector $\bloss_t
\in \partial\loss_t(\bp_t)$ given by a subgradient of $\ell_t$
leads to the desired bounds. Indeed, by the convexity of $\loss_t$, the regret at each time $t$ with respect to any
vector $\bu_t \in \R_+^d$ with $\norm[\bu_t]_1 > 0$ is then bounded as
\[
\norm[\bu_t]_1 \left( \loss_t(\bp_t) -  \loss_t \!\biggl(\frac{\bu_t}{\norm[\bu_t]_1}\biggr) \right)
\le \bigl(\norm[\bu_t]_1\bp_t - \bu_t\bigr)^{\top}\bloss_t\,.
\]

\section{Proof of Theorem~\ref{lem:tubeboundbw1}; application of the bound to
two different updates}

\begin{proof}
The beginning and the end of the proof are similar to the one of Theorem~\ref{prop:dtv},
as they do not depend on the specific weight update. In particular, inequalities \eqref{eq:proofdtv0} and \eqref{eq:proofdtv1} remain the same.
The proof is modified after~\eqref{eq:proofdtv2}, which this time we upper bound using the first condition in~\eqref{eq:cdw},
\begin{multline}
\label{eq:bwdtv2}
\sum_{i=1}^d \left( u_{i,t} \ln \frac{1}{\wh{p}_{i,t}} - u_{i,t-1} \ln \frac{1}{v_{i,t}} \right) =  \sum_{i \,:\, u_{i,t} \geq u_{i,t-1}} \left(u_{i,t}-u_{i,t-1}\right) \ln \frac{1}{\wh{p}_{i,t}} + u_{i,t-1} \ln \frac{v_{i,t}}{\wh{p}_{i,t}} \\
+ \sum_{i \,:\, u_{i,t} < u_{i,t-1}} \underbrace{(u_{i,t}-u_{i,t-1})}_{\leq 0}
\underbrace{\ln \frac{1}{v_{i,t}}}_{\geq \ln (1/w_{i,t})} + u_{i,t} \ln \frac{v_{i,t}}{\wh{p}_{i,t}}~.
\end{multline}
By definition of the shared update~\eqref{eq:shareupdatebw}, we have $1/\wh{p}_{i,t} \leq Z_t/(\alpha \, w_{i,t})$ and
$v_{i,t} / \wh{p}_{i,t} \leq 1/(1-\alpha)$. We then upper bound the quantity at hand in~\eqref{eq:bwdtv2} by
\begin{eqnarray*}
& & \ \sum_{i \,:\, u_{i,t} \geq u_{i,t-1}} \left(u_{i,t}-u_{i,t-1}\right) \ln \left( \frac{Z_t}{\alpha \, w_{i,t}} \right)
+ \left( \sum_{i \,:\, u_{i,t} \geq u_{i,t-1}} u_{i,t-1} + \sum_{i \,:\, u_{i,t} < u_{i,t-1}}u_{i,t} \right) \ln \frac{1}{1-\alpha} \\
& & + \sum_{i \,:\, u_{i,t} < u_{i,t-1}} \left(u_{i,t}-u_{i,t-1}\right) \ln \frac{1}{w_{i,t}} \\
& = & D_{\mathrm{TV}}(\bu_{t},\bu_{t-1}) \ln \frac{Z_t}{\alpha} +  \bigl(\norm[\b u_t]_1 -  D_{\mathrm{TV}}(\bu_{t},\bu_{t-1}) \bigr) \ln \frac{1}{1-\alpha} + \sum_{i=1}^d \left(u_{i,t}-u_{i,t-1}\right) \ln \frac{1}{w_{i,t}}\,.
\end{eqnarray*}
Proceeding as in the end of the proof of Theorem~\ref{prop:dtv}, we then get the claimed bound, provided that we can show that
\[
\sum_{t=2}^T \sum_{i=1}^d \left(u_{i,t}-u_{i,t-1}\right) \ln \frac{1}{w_{i,t}}
\leq n(\b u_1^T) \,(\ln d + T \ln C)  - \norm[\b u_1]_1 \ln d\,,
\]
which we do next. Indeed, the left-hand side can be rewritten as
\begin{eqnarray*}
 & & \sum_{t=2}^T \sum_{i=1}^d \left(u_{i,t} \ln \frac{1}{w_{i,t}} - u_{i,t} \ln \frac{1}{w_{i,t+1}} \right) +
 \sum_{t=2}^T \sum_{i=1}^d \left(u_{i,t} \ln \frac{1}{w_{i,t+1}} - u_{i,t-1} \ln \frac{1}{w_{i,t}} \right) \\
 & \leq & \left( \sum_{t=2}^{T} \sum_{i=1}^d u_{i,t} \ln \frac{C \, w_{i,t+1}}{w_{i,t}} \right) + \left( \sum_{i=1}^d u_{i,T} \ln \frac{1}{w_{i,T+1}}
 - \sum_{i=1}^d u_{i,1} \ln \frac{1}{w_{i,2}} \right)  \\
 & \leq & \left( \sum_{i=1}^d \left( \max_{t=1,\dots,T} u_{i,t} \right)
 \sum_{t=2}^{T} \ln \frac{C \, w_{i,t+1}}{w_{i,t}} \right) +
 \left( \sum_{i=1}^d \left( \max_{t=1,\ldots,T} u_{i,t} \right) \ln \frac{1}{w_{i,T+1}}
 - \sum_{i=1}^d u_{i,1} \ln \frac{1}{w_{i,2}} \right) \\
 & = & \sum_{i=1}^d \left( \max_{t=1,\dots,T} u_{i,t} \right) \left( (T-1)\ln C + \ln \frac{1}{w_{i,2}} \right)
 - \sum_{i=1}^d u_{i,1} \ln \frac{1}{w_{i,2}}\,,
\end{eqnarray*}
where we used $C \geq 1$ for the first inequality and the second condition in~\eqref{eq:cdw} for the second inequality.
The proof is concluded by noting that \eqref{eq:cdw} entails
$w_{i,2} \geq (1/C) w_{i,1} \geq (1/C) v_{i,1} = 1/(dC)$ and that the coefficient
$\max_{t=1,\dots,T} u_{i,t} - u_{i,1}$ in front of $\ln (1/w_{i,2})$ is nonnegative.
\end{proof}

The first update uses
$w_{j,t} = \max_{s \leq t} v_{j,s}$. Then~\eqref{eq:cdw} is satisfied with
$C =1$. Moreover, since a sum of maxima of nonnegative elements is smaller than
the sum of the sums, $Z_t \leq \min \{d,t\} \leq T$.
This immediately gives the following result.
\begin{corollary}
\label{co:bw}
Suppose Algorithm~\ref{alg:genupd} is run with the
update~\eqref{eq:shareupdatebw} with $w_{j,t} = \max_{s \leq t} v_{j,s}$.
For all $T \geq 1$, for all sequences $\bloss_1,\dots,\bloss_T$
of loss vectors $\bloss_t\in [0,1]^d$, and for all
$\bq_1,\dots,\bq_T \in \Delta_d$,
\begin{align*}
\sum_{t=1}^T \bp_t^{\top}\bloss_t - \sum_{t=1}^T \bq_t^{\top}\bloss_t
\le & \, \frac{n(\bq_1^T) \ln d}{\eta}
+ \frac{\eta}{8} T + \frac{m(\bq_1^T)}{\eta} \ln \frac{T}{\alpha}
+ \frac{T-m(\bq_1^T)-1}{\eta} \ln \frac{1}{1-\alpha}\,.
\end{align*}
\end{corollary}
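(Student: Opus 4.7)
The plan is to obtain Corollary~\ref{co:bw} as a direct specialization of Theorem~\ref{lem:tubeboundbw1}, so essentially all the work is verifying that the particular weight choice $w_{j,t} = \max_{s \leq t} v_{j,s}$ fits the template and then tracking which terms simplify.

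First I would check that the two conditions in \eqref{eq:cdw} hold with constant $C = 1$. The lower bound $v_{j,t} \leq w_{j,t}$ is immediate from the definition of $w_{j,t}$ as a maximum over a set containing $v_{j,t}$; the upper bound $w_{j,t} \leq 1$ follows because each $v_{j,s}$ is a coordinate of the probability vector $\mathbf{v}_s \in \Delta_d$. For the monotonicity condition, observe that $w_{j,t+1} = \max\{w_{j,t}, v_{j,t+1}\} \geq w_{j,t}$, so the inequality $C\, w_{j,t+1} \geq w_{j,t}$ holds with $C = 1$. Consequently the term $n(\bu_1^T)\, T \ln C/\eta$ in the general bound vanishes.

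Next I would bound the normalizing constant. By exchanging the sum and the maximum crudely, $Z_t = \sum_{i=1}^d \max_{s \leq t} v_{i,s} \leq \sum_{i=1}^d \sum_{s=1}^t v_{i,s} = \sum_{s=1}^t \|\mathbf{v}_s\|_1 = t$, while the bound $w_{i,t} \leq 1$ gives $Z_t \leq d$. Hence $\max_{t \leq T} Z_t \leq \min\{d,T\} \leq T$, which is the factor replacing $\max_{t \leq T} Z_t$ inside the logarithm of the fourth term of Theorem~\ref{lem:tubeboundbw1}.

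Finally I would apply Theorem~\ref{lem:tubeboundbw1} to the sequence $\bu_t = \bq_t \in \Delta_d$, so that $\|\bu_t\|_1 = 1$ for every $t$ and $\sum_{t=1}^T \|\bu_t\|_1 = T$, $\sum_{t=2}^T \|\bu_t\|_1 = T - 1$. Substituting these values, together with $\ln C = 0$ and $\max_{t \leq T} Z_t \leq T$, into the bound of Theorem~\ref{lem:tubeboundbw1} yields the stated inequality term by term. I do not expect any real obstacle: the only mildly subtle point is the $Z_t$ bound, which requires the simultaneous use of $w_{i,t} \leq 1$ and the telescoping interpretation of $\sum_i \max_{s \leq t} v_{i,s}$ via $\sum_s \|\mathbf{v}_s\|_1 = t$.
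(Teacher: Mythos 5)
Your proof is correct and follows essentially the same route as the paper's: verify the two conditions in \eqref{eq:cdw} with $C=1$, bound $Z_t \leq \min\{d,t\}$ by the ``sum of maxima $\leq$ sum of sums'' argument (which is exactly the paper's one-line justification), and then specialize Theorem~\ref{lem:tubeboundbw1} to $\bu_t = \bq_t \in \Delta_d$ so that $\norm[\bu_t]_1 = 1$ and the $T\ln C$ term vanishes. You simply spell out the steps in more detail than the paper does.
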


The second update we discuss
uses $w_{j,t} = \max_{s \leq t} e^{\gamma(s-t)} v_{j,s}$
in~\eqref{eq:shareupdatebw} for some $\gamma > 0$.
Both conditions in \eqref{eq:cdw} are satisfied with $C = e^\gamma$. One also has that
\[
Z_t \leq d \qquad \mbox{and} \qquad Z_t \leq \sum_{\tau \geq 0} e^{-\gamma \tau} = \frac{1}{1 - e^{-\gamma}} \leq \frac{1}{\gamma}
\]
as $e^x \geq 1+x$ for all real $x$. The bound of Theorem~\ref{lem:tubeboundbw1} then instantiates
as
\[
\frac{n(\bq_1^T) \ln d}{\eta} + \frac{n(\bq_1^T) \, T \gamma}{\eta} + \frac{\eta}{8} T
+ \frac{m(\bq_1^T)}{\eta} \ln \frac{\min \{ d, \,1/\gamma \}}{\alpha} +
\frac{T - m(\bq_1^T) - 1}{\eta} \ln \frac{1}{1-\alpha}
\]
when sequences $\bu_t = \bq_t \in \Delta_d$ are considered. This bound is best understood
when $\gamma$ is tuned optimally based on $T$ and on two bounds $m_0$ and $n_0$
over the quantities $m(\bq_1^T)$ and $n(\bq_1^T)$. Indeed,
by optimizing $n_0 T \gamma + m_0 \ln (1/\gamma)$, i.e.,
by choosing $\gamma = m_0 / (n_0 \, T)$, one gets a bound
that improves on the one of the previous corollary:

\begin{corollary}
\label{co:bw-pastdec}
Let $m_0,\,n_0 > 0$.
Suppose Algorithm~\ref{alg:genupd} is run with the
update~$w_{j,t} = \max_{s \leq t} e^{\gamma(s-t)} v_{j,s}$ where
$\gamma = m_0 / (n_0 \, T)$.
For all $T \geq 1$, for all sequences $\bloss_1,\dots,\bloss_T$
of loss vectors $\bloss_t\in [0,1]^d$, and for all
$\bq_1,\dots,\bq_T \in \Delta_d$ such that $m(\bq_1^T) \leq m_0$
and $n(\bq_1^T) \le n_0$, we have
\begin{eqnarray*}
\sum_{t=1}^T \bp_t^{\top}\bloss_t - \sum_{t=1}^T \bq_t^{\top}\bloss_t
& \leq & \frac{n_0 \ln d}{\eta}
+ \frac{m_0}{\eta} \left( 1 + \ln \, \min \left\{ d, \, \frac{n_0 \, T}{m_0} \right\} \right) \\
& & \ \ \ + \frac{\eta}{8} T
+ \frac{m_0}{\eta} \ln \frac{1}{\alpha} +
\frac{T - m_0 - 1}{\eta} \ln \frac{1}{1-\alpha}\,.
\end{eqnarray*}
\end{corollary}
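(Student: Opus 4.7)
The strategy is to apply Theorem~\ref{lem:tubeboundbw1} with the specific past-decayed-max weights $w_{j,t} = \max_{s\le t} e^{\gamma(s-t)} v_{j,s}$, specialize the conclusion to convex combinations $\bu_t = \bq_t \in \Delta_d$, and finally select $\gamma$ as the minimizer of the only $\gamma$-dependent part of the resulting bound. There is essentially nothing new to invent: once the hypotheses \eqref{eq:cdw} are verified and $\max_t Z_t$ is controlled, the corollary follows by plugging numbers in and optimizing one scalar.

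First I would verify \eqref{eq:cdw}. Taking $s=t$ in the defining maximum gives $w_{j,t}\ge v_{j,t}$, and since $s\le t$ together with $v_{j,s}\in[0,1]$ makes every argument of the max at most $1$, we also have $w_{j,t}\le 1$. For the multiplicative stability condition, shift the exponent: $e^\gamma w_{j,t+1} = \max_{s\le t+1} e^{\gamma(s-t)} v_{j,s} \ge \max_{s\le t} e^{\gamma(s-t)} v_{j,s} = w_{j,t}$, so $C = e^\gamma$ is admissible. Next I would bound $Z_t$ in two ways. Termwise $w_{j,t}\le 1$ gives $Z_t\le d$. For the other bound I would upper-bound each max by the corresponding sum, swap the order of summation, and use that each $\bv_s$ is a probability vector:
\[
Z_t \;\le\; \sum_{s\le t} e^{\gamma(s-t)}\sum_{j=1}^d v_{j,s} \;=\; \sum_{s\le t} e^{\gamma(s-t)} \;\le\; \sum_{\tau\ge 0} e^{-\gamma\tau} \;=\; \frac{1}{1-e^{-\gamma}} \;\le\; \frac{1}{\gamma}~,
\]
the last step via $e^x\ge 1+x$; hence $\max_t Z_t \le \min\{d, 1/\gamma\}$.

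Plugging $C=e^\gamma$ (so $\ln C = \gamma$) and this bound on $\max_t Z_t$ into Theorem~\ref{lem:tubeboundbw1}, and specializing to $\bu_t=\bq_t\in\Delta_d$ (so $\|\bu_t\|_1=1$, $\sum_{t=1}^T\|\bu_t\|_1=T$, and $\sum_{t=2}^T\|\bu_t\|_1-m(\bq_1^T)=T-1-m(\bq_1^T)$), one recovers exactly the intermediate display in the paragraph preceding the corollary statement. The rest is algebra. Using $n(\bq_1^T)\le n_0$ and $m(\bq_1^T)\le m_0$ in the terms whose coefficients are nonnegative (all of them, provided $\alpha\le 1/2$ so that $\ln((1-\alpha)/\alpha)\ge 0$), and then splitting $m(\bq_1^T)\ln(1/\alpha) + (T-1-m(\bq_1^T))\ln(1/(1-\alpha))$ as $m_0\ln((1-\alpha)/\alpha) + (T-1)\ln(1/(1-\alpha))$ and re-collecting, one extracts the desired $\frac{m_0}{\eta}\ln(1/\alpha) + \frac{T-m_0-1}{\eta}\ln(1/(1-\alpha))$ form.

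Finally, the stated choice $\gamma = m_0/(n_0 T)$ is precisely the first-order minimizer of $n_0 T\gamma + m_0\ln(1/\gamma)$; substituting it gives $n_0 T\gamma = m_0$ and $\ln(1/\gamma) = \ln(n_0 T/m_0)$, so those two terms collapse into $m_0(1+\ln(n_0 T/m_0))$, and inside $\ln\min\{d,1/\gamma\}$ one reads $1/\gamma = n_0 T/m_0$. Collecting yields the displayed bound. The only mildly non-mechanical step is the rearrangement of the $\alpha$-terms above, which is what makes the $m(\bq_1^T)$-dependence in the intermediate bound compatible with replacement by $m_0$; everything else is either immediate from the definitions of $w_{j,t}$ or a routine scalar optimization.
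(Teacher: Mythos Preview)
Your approach is essentially identical to the paper's: verify the conditions~\eqref{eq:cdw} with $C=e^{\gamma}$, bound $Z_t$ by both $d$ and $\sum_{\tau\ge 0}e^{-\gamma\tau}\le 1/\gamma$, instantiate Theorem~\ref{lem:tubeboundbw1} with $\bu_t=\bq_t\in\Delta_d$, and then minimize $n_0 T\gamma + m_0\ln(1/\gamma)$ over $\gamma$ to obtain the stated choice. If anything, you are more explicit than the paper about the rearrangement needed to pass from $m(\bq_1^T)$ to $m_0$ in the $\alpha$-dependent terms, a point the paper leaves implicit.
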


As the factors $e^{-\gamma t}$ cancel out in the numerator and
denominator of the ratio in~\eqref{eq:shareupdatebw},
there is a straightforward implementation of the algorithm (not requiring the
knowledge of $T$) that needs to maintain only $d$ weights.

In contrast, the corresponding algorithm of \cite{bowa}, using the updates
$\wh{p}_{j,t} = (1-\alpha) v_{j,t} + \alpha S_t^{-1}\sum_{s \leq t-1}
(s-t)^{-1} v_{j,s}$ or
$\wh{p}_{j,t} = (1-\alpha) v_{j,t} + \alpha S_t^{-1} \max_{s \leq t-1}
(s-t)^{-1} v_{j,s}$,
where $S_t$ denote normalization factors, needs
to maintain $O(dT)$ weights with a naive implementation, and
$O(d \ln T)$ weights with a more sophisticated one. In addition,
the obtained bounds are slightly worse than the one stated
above in Corollary~\ref{co:bw-pastdec} as an
additional factor of $m_0 \ln(1+\ln T)$ is present in
\cite[Corollary~9]{bowa}.

\section{Proof of Theorem~\ref{prop:dtv-adapt}; illustration of the obtained bound}

We first adapt Lemma~\ref{lem:b0}.

\begin{lemma}
\label{lem:b0adapt}
The forecaster based on the loss and shared updates \eqref{eq:timevaryingupdate}
satisfies, for all $t\geq 1$ and for all $\bq_t\in\Delta_d$,
\[
\bigl(\bp_t - \bq_t\bigr)^{\! \top}\bloss_t
\le \sum_{i=1}^d q_{i,t} \left( \frac{1}{\eta_{t-1}} \ln \frac{1}{\wh{p}_{i,t}} -
\frac{1}{\eta_t} \ln \frac{1}{v_{i,t+1}} \right) +
\left(\frac{1}{\eta_t}- \frac{1}{\eta_{t-1}}\right) \! \ln d + \frac{\eta_{t-1}}{8}\,,
\]
whenever $\eta_t \leq \eta_{t-1}$.
\end{lemma}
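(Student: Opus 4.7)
\textbf{Proof proposal for Lemma~\ref{lem:b0adapt}.}

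The plan is to mirror the proof of Lemma~\ref{lem:b0} but with the two different learning rates $\eta_{t-1}$ and $\eta_t$ kept carefully separated. First I would apply Hoeffding's inequality at the \emph{old} rate $\eta_{t-1}$ to the convex combination $\bp_t^\top \bloss_t$, obtaining
\[
\bp_t^\top \bloss_t \le -\frac{1}{\eta_{t-1}} \ln\!\Bigl(\sum_{j=1}^d \wh{p}_{j,t}\, e^{-\eta_{t-1}\loss_{j,t}}\Bigr) + \frac{\eta_{t-1}}{8}.
\]
Next I would invert the time-varying loss update in~\eqref{eq:timevaryingupdate} to solve for $\loss_{i,t}$: writing $\beta = \eta_t/\eta_{t-1} \le 1$, the definition of $v_{i,t+1}$ yields
\[
\loss_{i,t} = \frac{1}{\eta_{t-1}} \ln \frac{1}{\wh{p}_{i,t}} - \frac{1}{\eta_t} \ln \frac{1}{v_{i,t+1}} - \frac{1}{\eta_t} \ln\!\Bigl(\sum_{j=1}^d \wh{p}_{j,t}^{\,\beta}\, e^{-\eta_t \loss_{j,t}}\Bigr).
\]
Aggregating this identity with the coefficients $q_{i,t}$ produces the desired leading terms in the statement, up to the two log-normalization constants.

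Combining the two displays, I would see that the inequality to prove reduces to
\[
\frac{1}{\eta_t} \ln\!\Bigl(\sum_{j=1}^d \wh{p}_{j,t}^{\,\beta}\, e^{-\eta_t \loss_{j,t}}\Bigr) - \frac{1}{\eta_{t-1}} \ln\!\Bigl(\sum_{j=1}^d \wh{p}_{j,t}\, e^{-\eta_{t-1} \loss_{j,t}}\Bigr) \;\le\; \Bigl(\frac{1}{\eta_t} - \frac{1}{\eta_{t-1}}\Bigr) \ln d.
\]
This is the main obstacle, and it is where the assumption $\eta_t \le \eta_{t-1}$ (i.e.\ $\beta \le 1$) is used. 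Setting $a_j = \wh{p}_{j,t}\, e^{-\eta_{t-1} \loss_{j,t}}$ one has $\wh{p}_{j,t}^{\,\beta}\, e^{-\eta_t \loss_{j,t}} = a_j^{\beta}$, so after multiplying by $\eta_t = \beta \eta_{t-1}$ the claim becomes
\[
\sum_{j=1}^d a_j^{\,\beta} \;\le\; \Bigl(\sum_{j=1}^d a_j\Bigr)^{\!\beta} d^{\,1-\beta},
\]
which is exactly Hölder's inequality applied with conjugate exponents $1/\beta$ and $1/(1-\beta)$ to the pairing $(a_j^{\,\beta}, 1)$. This single application of Hölder absorbs the mismatch between the two rates, at the price of the excess term $(1/\eta_t - 1/\eta_{t-1})\ln d$, and finishes the proof.
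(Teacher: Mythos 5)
Your proof is correct and is essentially the paper's own argument in a slightly different dress: the paper applies Hoeffding at rate $\eta_{t-1}$, then uses Jensen's inequality for the convex map $x \mapsto x^{\eta_{t-1}/\eta_t}$ with uniform weights $1/d$ to pass from the $\eta_{t-1}$-normalizer to the $\eta_t$-normalizer, and this Jensen step is \emph{exactly} the inequality $\sum_j a_j^{\beta} \le \bigl(\sum_j a_j\bigr)^{\beta} d^{1-\beta}$ you derive via H\"older with exponents $1/\beta$ and $1/(1-\beta)$. One small slip: the signs in your displayed inversion of the loss update are flipped (it should read $\loss_{i,t} = -\tfrac{1}{\eta_{t-1}}\ln\tfrac{1}{\wh p_{i,t}} + \tfrac{1}{\eta_t}\ln\tfrac{1}{v_{i,t+1}} - \tfrac{1}{\eta_t}\ln\bigl(\sum_j \wh p_{j,t}^{\,\beta} e^{-\eta_t\loss_{j,t}}\bigr)$), but since you then state the correct reduction this is only a typo, not a gap.
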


\begin{proof}
By Hoeffding's inequality,
\[
    \sum_{j=1}^d \wh{p}_{j,t}\,\loss_{j,t}
    \le
    - \frac{1}{\eta_{t-1}} \ln \left( {\sum_{j=1}^d \wh{p}_{j,t}\, e^{-\eta_{t-1}\,\loss_{j,t}}} \right) + \frac{\eta_{t-1}}{8}\,.
\]
By Jensen's inequality, since $\eta_t \leq \eta_{t-1}$ and thus $x \mapsto x^\frac{\eta_{t-1}}{\eta_{t}}$ is convex,
\[
\frac{1}{d} \sum_{j=1}^d \wh{p}_{j,t} \, e^{-\eta_{t-1} \ell_{j,t}} = \frac{1}{d} \sum_{j=1}^d  \left( \wh{p}_{j,t}^{\,\frac{\eta_t}{\eta_{t-1}}}
e^{-\eta_t \ell_{j,t}} \right)^{\! \frac{\eta_{t-1}}{\eta_t}} \geq \left(\frac{1}{d} \sum_{j=1}^d \wh{p}_{j,t}^{\,\frac{\eta_t}{\eta_{t-1}}}
e^{-\eta_t \ell_{j,t}} \right)^{\! \frac{\eta_{t-1}}{\eta_t}} \,.
\]
Substituting in Hoeffding's bound we get
\[
\bp_t^{\top} \bloss_t \leq
- \frac{1}{\eta_t} \ln \left( \sum_{j=1}^d \wh{p}_{j,t}^{\,\frac{\eta_t}{\eta_{t-1}}} e^{-\eta_t \ell_{j,t}} \right) + \left(\frac{1}{\eta_t}- \frac{1}{\eta_{t-1}}\right) \! \ln d + \frac{\eta_{t-1}}{8}\,.
\]
Now, by definition of the loss update in~\eqref{eq:timevaryingupdate}, for all $i \in \{ 1,\dots,d\}$,
\[
\sum_{j=1}^d \wh{p}_{j,t}^{\,\frac{\eta_t}{\eta_{t-1}}} e^{-\eta_t \ell_{j,t}}
= \frac{1}{v_{i,t+1}} \, \wh{p}_{i,t}^{\,\frac{\eta_t}{\eta_{t-1}}} e^{-\eta_t \ell_{i,t}}\,,
\]
which, after substitution in the previous bound leads to the inequality
\[
\bp_t^{\top} \bloss_t \leq
\ell_{i,t} + \frac{1}{\eta_{t-1}} \ln \frac{1}{\wh{p}_{i,t}} -
\frac{1}{\eta_t} \ln \frac{1}{v_{i,t+1}} + \left(\frac{1}{\eta_t}- \frac{1}{\eta_{t-1}}\right) \! \ln d + \frac{\eta_{t-1}}{8}\,,
\]
valid for all $i \in \{ 1,\dots,d\}$.
The proof is concluded by taking a convex aggregation over $i$ with respect to $\bq_t$.
\end{proof}

The proof of Theorem~\ref{prop:dtv-adapt} follows the steps of the one
of Theorem~\ref{prop:dtv}; we sketch it below. \\

\noindent
\emph{Proof of Theorem~\ref{prop:dtv-adapt}.}
Applying Lemma~\ref{lem:b0adapt}
with $\b q_t = \b u_t / \norm[\b u_t]_1$, and multiplying by $\norm[\b u_t]_1$, we get for all $t \geq 1$ and $\bu_t \in \R_+^d$,
\begin{multline}
\label{eq:deb1}
\norm[\b u_t]_1 \,\bp_t^{\top}\bloss_t - \bu_t^{\top}\bloss_t
\le
\frac{1}{\eta_{t-1}} \sum_{i=1}^d u_{i,t} \ln \frac{1}{\wh{p}_{i,t}} -
\frac{1}{\eta_t} \sum_{i=1}^d u_{i,t} \ln \frac{1}{v_{i,t+1}} \\
+ \norm[\b u_t]_1 \left(\frac{1}{\eta_t}- \frac{1}{\eta_{t-1}}\right) \! \ln d + \frac{\eta_{t-1}}{8} \, \norm[\b u_t]_1\,.
\end{multline}
We will sum these bounds over $t \geq 1$ to get the desired result but need to perform first some additional boundings
for $t \geq 2$; in particular, we examine
\begin{multline}
\label{eq:deb2}
\frac{1}{\eta_{t-1}} \sum_{i=1}^d u_{i,t} \ln \frac{1}{\wh{p}_{i,t}} -
\frac{1}{\eta_t} \sum_{i=1}^d u_{i,t} \ln \frac{1}{v_{i,t+1}} \\
=
\frac{1}{\eta_{t-1}} \sum_{i=1}^d \left( u_{i,t} \ln \frac{1}{\wh{p}_{i,t}} - u_{i,t-1} \ln \frac{1}{v_{i,t}} \right)
+ \sum_{i=1}^d \left( \frac{u_{i,t-1}}{\eta_{t-1}} \ln \frac{1}{v_{i,t}} - \frac{u_{i,t}}{\eta_t} \ln \frac{1}{v_{i,t+1}} \right)\,,
\end{multline}
where the first difference in the right-hand side can be bounded as in~\eqref{eq:proofdtv2} by
\begin{eqnarray}
\nonumber
\lefteqn{\sum_{i=1}^d \left( u_{i,t} \ln \frac{1}{\wh{p}_{i,t}} - u_{i,t-1} \ln \frac{1}{v_{i,t}} \right)} \\
\nonumber
& \leq & \sum_{i\,:\,u_{i,t} \geq u_{i,t-1}} \left( \left(u_{i,t}-u_{i,t-1}\right) \ln \frac{1}{\wh{p}_{i,t}} + u_{i,t-1} \ln \frac{v_{i,t}}{\wh{p}_{i,t}} \right) + \sum_{i\,:\,u_{i,t} < u_{i,t-1}} u_{i,t} \ln \frac{v_{i,t}}{\wh{p}_{i,t}} \\
\nonumber
& \leq & D_{TV}(\bu_{t},\bu_{t-1}) \ln \frac{d}{\alpha_t} + \bigl(\norm[\b u_t]_1
- {D_{TV}(\bu_{t},\bu_{t-1})} \bigr) \ln \frac{1}{1-\alpha_t} \\
\label{eq:deb3}
& \leq & D_{TV}(\bu_{t},\bu_{t-1}) \ln \frac{d(1-\alpha_T)}{\alpha_T} + \norm[\b u_t]_1 \ln \frac{1}{1-\alpha_t}\,,
\end{eqnarray}
where we used for the second inequality that the shared update in~\eqref{eq:timevaryingupdate} is such that
$1/\wh{p}_{i,t} \leq d/\alpha_t$ and $v_{i,t}/\wh{p}_{i,t} \leq 1/(1-\alpha_t)$,
and for the third inequality, that $\alpha_t \geq \alpha_T$ and $x \mapsto (1-x)/x$ is increasing
on $(0,1]$. Summing~\eqref{eq:deb2} over $t = 2,\ldots,T$ using~\eqref{eq:deb3}
and the fact that $\eta_t \geq \eta_T$, we get
\begin{eqnarray*}
\lefteqn{\sum_{t=2}^T \left( \frac{1}{\eta_{t-1}} \sum_{i=1}^d u_{i,t} \ln \frac{1}{\wh{p}_{i,t}} -
\frac{1}{\eta_t} \sum_{i=1}^d u_{i,t} \ln \frac{1}{v_{i,t+1}} \right)} \\
& \leq & \frac{m(\bu_1^T)}{\eta_T}\ln \frac{d(1-\alpha_T)}{\alpha_T} +
\sum_{t=2}^T \frac{\norm[\b u_t]_1}{\eta_{t-1}} \ln \frac{1}{1-\alpha_t}
+ \sum_{i=1}^d \biggl( \frac{u_{i,1}}{\eta_{1}} \ln \frac{1}{v_{i,2}} - \underbrace{\frac{u_{i,T}}{\eta_T}
\ln \frac{1}{v_{i,T+1}}}_{\geq 0} \biggr).
\end{eqnarray*}
An application of~\eqref{eq:deb1}~---including for $t=1$, for which we recall that $\wh{p}_{i,1} = 1/d$ and
$\eta_1 = \eta_0$ by convention--- concludes the proof. \hfill $\square$

We now instantiate the obtained bound to the case
of, e.g., $T$--adaptive regret guarantees, when $T$ is unknown and/or can increase without bounds.
\begin{corollary}
\label{cor:adapt-adapt}
The forecaster based on the updates discussed above with
$\eta_t = \sqrt{\bigl(\ln(dt)\bigr)/t}$ for $t \geq 3$ and
$\eta_0 = \eta_1 = \eta_2 = \eta_3$ on the one hand, $\alpha_t = 1/t$ on the other hand, is such
that for all $T \geq 3$ and for all sequences $\bloss_1,\dots,\bloss_T$ of loss
vectors $\bloss_t\in [0,1]^d$,
\[
\max_{\scriptsize [r,s] \subset [1,T]} \left\{ \sum_{t=r}^{s} \bp_t^{\top}\bloss_t
 - \min_{\bq \in \Delta_d} \sum_{t=r}^{s} \bq^{\top}\bloss_t \right\}
 \leq \sqrt{2 T \ln (dT)} + \sqrt{3 \ln (3d)}\,.
\]
\end{corollary}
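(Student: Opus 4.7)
The plan is to derive Corollary~\ref{cor:adapt-adapt} directly from Theorem~\ref{prop:dtv-adapt}, following the same reduction used for Corollary~\ref{cor:adapt}. Fix $[r,s] \subset [1,T]$ and $\bq \in \Delta_d$, and apply Theorem~\ref{prop:dtv-adapt} to the comparison sequence $\bu_t = \bq \cdot \mathbf{1}_{t \in [r,s]}$ and $\bu_t = \b 0$ otherwise. Then the left-hand side of the theorem becomes exactly $\sum_{t=r}^s (\bp_t^\top \bloss_t - \bq^\top \bloss_t)$, so maximizing over $\bq$ and $[r,s]$ yields the adaptive regret. For this sequence one has $\norm[\bu_1]_1 \le 1$, $\norm[\bu_t]_1 = \mathbf{1}_{t \in [r,s]} \in \{0,1\}$, $\sum_t \norm[\bu_t]_1 \le T$, and $m(\bu_1^T) \le 1$ (the only possible TV-shift occurs at $t=r$, since going from $\bq$ back to $\b 0$ at $t=s+1$ contributes zero in the TV sense).

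Next I would bound each of the four terms in Theorem~\ref{prop:dtv-adapt} using $\eta_t = \sqrt{\ln(dt)/t}$ and $\alpha_t = 1/t$. The first term telescopes because $\norm[\bu_t]_1$ is an indicator:
\[
\frac{\norm[\bu_1]_1}{\eta_1} + \sum_{t=2}^T \norm[\bu_t]_1\Bigl(\tfrac{1}{\eta_t}-\tfrac{1}{\eta_{t-1}}\Bigr) \le \frac{1}{\eta_s} \le \frac{1}{\eta_T},
\]
giving $\le (\ln d)/\eta_T$. The second term is at most $(1/\eta_T)\ln(d(T-1)) \le \sqrt{T\ln(dT)}$ by direct substitution. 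For the third term, I use $\ln\bigl(1/(1-\alpha_t)\bigr) = \ln(t/(t-1)) \le 1/(t-1)$, and then $1/(\eta_{t-1}(t-1)) \le 1/\sqrt{(t-1)\ln(d(t-1))}$, whose sum over $t$ is of smaller order than $\sqrt{T\ln(dT)}$. For the fourth term $\sum_{t=1}^T \eta_{t-1}/8$, I exploit the monotonicity of $t \mapsto \sqrt{\ln(dt)/t}$ (for $t \ge 3$) and the calculus identity $\frac{d}{dt}\!\bigl[2\sqrt{t\ln(dt)}\bigr] = (\ln(dt)+1)/\sqrt{t\ln(dt)} \ge \sqrt{\ln(dt)/t}$, which yields $\int_3^T \sqrt{\ln(dt)/t}\,dt \le 2\sqrt{T\ln(dT)}$ and thus a contribution of order $\sqrt{T\ln(dT)}$ from this term as well.

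Combining the main contributions, the surviving leading terms are $(1/\eta_T)\ln(dT)$ and $\tfrac18 \sum \eta_{t-1}$, which can be grouped so that their sum is at most $\sqrt{2T\ln(dT)}$ (this is where the constant $\sqrt{2}$ emerges, via an arithmetic-mean/geometric-mean style balance matching the optimal offline tuning up to the factor $\sqrt{2}$ that is the standard cost of time-varying parameters, as in \cite{AuCeGe02}). The boundary terms with indices $t \le 3$, where $\eta_t$ is forced to equal the constant value $\sqrt{\ln(3d)/3}$, produce the additive correction $\sqrt{3\ln(3d)}$.

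The main obstacle I expect is the careful bookkeeping of the initial indices $t = 0,1,2,3$: on this range the formula $\eta_t = \sqrt{\ln(dt)/t}$ is not used and $\eta_0 = \eta_1 = \eta_2 = \eta_3 = \sqrt{\ln(3d)/3}$, so the integral estimate for $\sum \eta_{t-1}$ has to be started at $t = 3$ and combined with the explicit contribution of the initial block, which is exactly what gives rise to the additive $\sqrt{3\ln(3d)}$. Another subtle point is verifying that the same strategy handles the case $r=1$ (where $m(\bu_1^T) = 0$ but $\norm[\bu_1]_1 = 1$) and $r \ge 2$ (where $m(\bu_1^T) = 1$ but $\norm[\bu_1]_1 = 0$) simultaneously; the key is that in both cases the telescoping bound for the first term gives $(\ln d)/\eta_s \le (\ln d)/\eta_T$, so a single inequality covers both scenarios.
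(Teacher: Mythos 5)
Your plan follows the paper's own proof closely: same comparison sequence $\bu_t = \bq\,\mathds{1}_{t\in[r,s]}$ inherited from Corollary~\ref{cor:adapt}, same invocation of Theorem~\ref{prop:dtv-adapt}, same four-term decomposition, and the same integral/comparison-with-a-monotone-function estimate for $\sum_t \eta_{t-1}$. Your term-by-term bounds are a bit sharper than the paper's in one place: for the third term you use $\ln\bigl(t/(t-1)\bigr)\le 1/(t-1)$ and then bound $1/\bigl(\eta_{t-1}(t-1)\bigr)$, yielding a sum of order $\sqrt{T/\ln(dT)}$, whereas the paper uses the cruder telescoping $\sum_{t}\ln\bigl(t/(t-1)\bigr)=\ln T$ together with $\eta_{t-1}\ge\eta_T$, which yields $(\ln T)/\eta_T = \Theta\bigl(\sqrt{T\ln T/\ln(dT)}\bigr)$, i.e.\ not lower order. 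So in that sense your estimate is tighter.

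However, the final step is a genuine gap: you assert that the surviving terms $(1/\eta_T)\ln(dT)$ and $\tfrac18\sum\eta_{t-1}$ can be ``grouped so that their sum is at most $\sqrt{2T\ln(dT)}$'' by an AM--GM balance. That does not go through with the specific choice $\eta_t=\sqrt{\ln(dt)/t}$. With this $\eta_T$, the term $(1/\eta_T)\ln(dT)$ equals $\sqrt{T\ln(dT)}$ on the nose, and your own integral estimate gives $\tfrac18\sum\eta_{t-1}\approx\tfrac14\sqrt{T\ln(dT)}$; you have also not accounted for the first term $(\ln d)/\eta_T$, which is not negligible (it is only asymptotically lower order in $T$ for fixed $d$, not uniformly). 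Adding everything up one lands somewhere around $\tfrac94\sqrt{T\ln(dT)}$ rather than $\sqrt2\,\sqrt{T\ln(dT)}$. An AM--GM balance of the form $\tfrac{A}{\eta}+\tfrac{\eta B}{8}\le\sqrt{AB/2}$ only holds at the \emph{optimizing} $\eta=\sqrt{8A/B}$, which would correspond to $\eta_t\propto 2\sqrt2\sqrt{\ln(dt)/t}$, not to $\eta_t=\sqrt{\ln(dt)/t}$; with a fixed (sub-optimal) $\eta$ you do not get the geometric-mean bound for free. You also need to account explicitly for the edge indices $t\le 3$ where the formula for $\eta_t$ is overridden by the convention $\eta_0=\eta_1=\eta_2=\eta_3$; you flag this correctly, but in your third-term estimate the bound $1/(\eta_{t-1}(t-1))\le 1/\sqrt{(t-1)\ln(d(t-1))}$ only holds for $t-1\ge3$. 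In short, the structure of the argument is right and matches the paper, but the constants $\sqrt2$ and $\sqrt3$ require a concrete bookkeeping of all four terms (plus the $t\le3$ boundary) rather than an appeal to AM--GM, and as written the argument does not establish the stated numerical bound.
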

\begin{proof}
The sequence $n \mapsto \ln(n)/n$ is only non-increasing after round $n \geq 3$,
so that the defined sequences of $(\alpha_t)$ and $(\eta_t)$ are non-increasing, as desired.
For a given pair $(r,s)$ and a given $\bq \in \Delta_d$, we consider the
sequence $\nu_1^T$ defined in the proof of Corollary~\ref{cor:adapt};
it satisfies that $m(\bu_1^T) \leq 1$ and $\norm[\bu_t]_1 \leq 1$ for all $t \geq 1$.
Therefore, Theorem~\ref{prop:dtv-adapt} ensures that
\[
\sum_{t=r}^{s} \bp_t^{\top}\bloss_t  - \min_{\bq \in \Delta_d} \sum_{t=r}^{s} \bq^{\top}\bloss_t
\leq \frac{\ln d}{\eta_T} + \frac{1}{\eta_T} \ln \underbrace{\frac{d(1-\alpha_T)}{\alpha_T}}_{\leq dT}
+ \!\!\!\! \underbrace{\sum_{t=2}^T \frac{1}{\eta_{t-1}} \ln \frac{1}{1-\alpha_t}}_{\leq (1/\eta_T) \sum_{t=2}^T \ln (t/(t-1))
= (\ln T)/\eta_T} \!\!\!\! + \sum_{t=1}^T \frac{\eta_{t-1}}{8}\,.
\]
It only remains to substitute the proposed values of $\eta_t$ and to note that
\[
\sum_{t=1}^T \eta_{t-1} \leq 3\eta_3 + \sum_{t=3}^{T-1} \frac{1}{\sqrt{t}} \, \sqrt{\ln (dT)}
\leq 3 \sqrt{\frac{\ln(3d)}{3}} + 2 \sqrt{T} \, \sqrt{\ln (dT)}\,.
\]
\vspace{-1.2cm}

\end{proof}

\section{Proof of Theorem \ref{th:HW}}

We recall that the forecaster at hand is the one described in Algorithm~\ref{alg:1},
with the shared update $\bp_{t+1} = \psi_{t+1} \bigl(\uV \bigr)$ for
\begin{equation}
\label{eq:shHW}
\psi_{t+1} \bigl(\uV \bigr) \in \argmin_{\bx \in \Delta_d^\alpha} \mathcal{K}(\bx,\bv_{t+1})\,,
\qquad \mbox{where} \quad \mathcal{K}(\bx,\bv_{t+1}) = \sum_{i=1}^d x_i \ln \frac{x_i}{v_{i,t+1}}
\end{equation}
is the Kullback-Leibler divergence and $\Delta_d^\alpha = [{\alpha}/{d},1]^d \cap \Delta_d$ is the simplex
of convex vectors with the constraint that each component be larger than $\alpha/d$.

The proof of the performance bound starts with an extension of Lemma~\ref{lem:b0}.

\begin{lemma}
\label{lem:hw}
For all $t \geq 1$ and for all $\bq_t \in \Delta_d^\alpha $, the generalized forecaster with
the shared update~\eqref{eq:shHW} satisfies
$$ (\bp_t - \bq_t)^\top \bloss_t \leq \frac{1}{\eta} \sum_{i=1}^d q_{i,t} \ln \frac{\hat p_{i,t+1}}{\wh p_{i,t}} + \frac{\eta}{8} \,.$$
\end{lemma}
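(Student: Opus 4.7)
The plan is to deduce this lemma from Lemma~\ref{lem:b0} by showing that the projection step from the pre-weights $\bv_{t+1}$ onto $\Delta_d^\alpha$ can only decrease the quantity $-\sum_i q_{i,t}\ln(\cdot)$ when the competitor $\bq_t$ lies in $\Delta_d^\alpha$. The structure of the two bounds is identical except that the logarithm has numerator $v_{i,t+1}$ in Lemma~\ref{lem:b0} and $\wh p_{i,t+1}$ here, so everything reduces to establishing that for every $\bq_t\in\Delta_d^\alpha$,
\[
\sum_{i=1}^d q_{i,t}\,\ln v_{i,t+1} \;\leq\; \sum_{i=1}^d q_{i,t}\,\ln \wh p_{i,t+1}\,.
\]
Once this is in hand, subtracting $\sum_i q_{i,t}\ln \wh p_{i,t}$ from both sides and combining with Lemma~\ref{lem:b0} yields the claim.

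The above inequality is equivalent, after adding the common entropy term $\sum_i q_{i,t}\ln q_{i,t}$ to both sides, to $\mathcal{K}(\bq_t,\bv_{t+1})\geq \mathcal{K}(\bq_t,\wh{\bp}_{t+1})$. This is precisely the generalized Pythagorean inequality for the Kullback-Leibler Bregman divergence applied to the projection~\eqref{eq:shHW}: because $\Delta_d^\alpha$ is a closed convex subset of the relative interior of $\Delta_d$ and $\wh{\bp}_{t+1}$ is defined as the $\mathcal{K}$-projection of the pre-weights $\bv_{t+1}$ onto $\Delta_d^\alpha$, any $\bq_t\in\Delta_d^\alpha$ satisfies
\[
\mathcal{K}(\bq_t,\bv_{t+1}) \;\geq\; \mathcal{K}(\bq_t,\wh{\bp}_{t+1}) + \mathcal{K}(\wh{\bp}_{t+1},\bv_{t+1}) \;\geq\; \mathcal{K}(\bq_t,\wh{\bp}_{t+1})\,,
\]
the nonnegativity of the second summand being Gibbs' inequality.

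The only delicate point is invoking the Pythagorean inequality: strictly speaking one needs $\wh{\bp}_{t+1}$ and $\bv_{t+1}$ to have fully positive coordinates so that $\mathcal{K}$ is smooth and the first-order optimality condition at $\wh{\bp}_{t+1}$ can be written out. Both hold here: the loss update keeps $\bv_{t+1}$ in the (open) positive orthant since $\bp_t\in\Delta_d^\alpha$ has components bounded below by $\alpha/d>0$ and the $e^{-\eta\ell_{j,t}}$ factors are positive, and $\wh{\bp}_{t+1}\in\Delta_d^\alpha$ likewise has positive components. A standard Lagrangian computation on the box-constrained KL minimization~\eqref{eq:shHW} then yields the three-point identity/inequality above, which is the classical Bregman projection lemma; we simply cite it and conclude.
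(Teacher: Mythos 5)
Your proposal is correct and follows essentially the same route as the paper: start from Lemma~\ref{lem:b0}, recognize that the difference between the two bounds is exactly a comparison of $\mathcal{K}(\bq_t,\bv_{t+1})$ and $\mathcal{K}(\bq_t,\wh{\bp}_{t+1})$, and close the gap via the generalized Pythagorean inequality for the KL divergence under projection onto the convex set $\Delta_d^\alpha$. The extra remarks about positivity of the coordinates are a harmless bit of added rigor, not a change of approach.
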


\begin{proof}
We rewrite the bound of Lemma~\ref{lem:b0} in terms of Kullback-Leibler divergences,
\begin{eqnarray*}
(\bp_t - \bq_t)^\top \bloss_t  &\leq & \frac{1}{\eta} \sum_{i=1}^d q_{i,t} \ln \frac{v_{i,t+1}}{p_{i,t}} + \frac{\eta}{8}
	 =  \ \frac{\mathcal{K}(\bq_t,\bp_t) - \mathcal{K}(\bq_t,\bv_{t+1})}{\eta} + \frac{\eta}{8} \\
	& \leq & \frac{\mathcal{K}(\bq_t,\bp_t) - \mathcal{K}(\bq_t,\bp_{t+1})}{\eta} + \frac{\eta}{8}
= \frac{1}{\eta} \sum_{i=1}^d q_{i,t} \ln \frac{\hat p_{i,t+1}}{\wh p_{i,t}} + \frac{\eta}{8}\,,
\end{eqnarray*}
where the last inequality holds by applying a generalized Pythagorean theorem for Bregman divergences (here,
the Kullback-Leibler divergence) ---see, e.g., \cite[Lemma 11.3]{CeLu}.
\end{proof}

\begin{proof}
Let $\displaystyle{\bq_t = \frac{\alpha}{d} + (1-\alpha) \frac{\bu_t}{\norm[\bu_t]_1} \in \Delta_d^\alpha}$. We have by rearranging the terms for all $t$,
\begin{align*}
\bigl( \norm[\bu_t]_1 \bp_t - \bu_t \bigr)^\top \bloss_t & =
\norm[\bu_t]_1 \left(\bp_t-\bq_t\right)^\top \bloss_t + \left( \frac{\alpha}{d} \norm[\bu_t]_1 - \alpha \bu_t \right)^\top \bloss_t \\
& \leq \norm[\bu_t]_1 \left(\bp_t-\bq_t\right)^\top \bloss_t + {\alpha }\norm[\bu_t]_1 \,.
\end{align*}
Therefore, by applying Lemma~\ref{lem:hw} with $\bq_t \in \Delta_d^\alpha$, we further upper bound the quantity of interest as
\begin{eqnarray*}
\bigl( \norm[\bu_t]_1 \bp_t - \bu_t \bigr)^\top \bloss_t
\leq \frac{\norm[\bu_t]_1}{\eta} \sum_{i=1}^d q_{i,t} \ln \frac{\wh{p}_{i,t+1}}{\wh{p}_{i,t}} + \frac{\eta}{8}\norm[\bu_t]_1  + {\alpha }\norm[\bu_t]_1 \,.
\end{eqnarray*}
The upper bound is rewritten by summing over $t$ and applying an Abel transform to its first term,
\begin{eqnarray*}
& & \sum_{t=1}^T \frac{\norm[\bu_t]_1}{\eta} \sum_{i=1}^d q_{i,t} \ln \frac{\wh{p}_{i,t+1}}{\wh{p}_{i,t}} + \frac{\eta}{8}\norm[\bu_t]_1  + {\alpha }\norm[\bu_t]_1 \\
& = & \frac{\norm[\bu_1]_1 \ln d}{\eta} + \frac{\norm[\bu_T]_1}{\eta} \underbrace{\sum_{i=1}^d q_{i,T} \ln \wh{p}_{i,T+1}}_{\leq 0} +
\frac{1}{\eta} \sum_{t=2}^{T} \sum_{i=1}^d \underbrace{\bigl( \norm[\bu_{t}]_1 q_{i,t} - \norm[\bu_{t-1}] q_{i,t-1} \bigr)}_{=  (1-\alpha) (u_{i,t} - u_{i,t-1})}
\underbrace{\ln \frac{1}{\hat{p}_{i,t}}}_{0 \le \ \cdot \ \leq \ln \frac{d}{\alpha}} \\
& & + \left(\frac{\eta}{8} + {\alpha }\right) \sum_{t=1}^T\norm[\bu_t]_1 \\
& \leq & \frac{\norm[\bu_1]_1 \ln d}{\eta} + \frac{1-\alpha}{\eta} \left( \sum_{t=2}^T
	{D_{\mathrm{TV}}(\bu_t,\bu_{t-1})} \right) \ln \frac{d}{\alpha} + \left(\frac{\eta}{8} + {\alpha }\right) \sum_{t=1}^T\norm[\bu_t]_1 \,.
\end{eqnarray*}

\end{proof}

\end{document}